\newcounter{todoCounter}
\newcommand*\bdelta{\ensuremath{\boldsymbol\delta}}
\newcommand*\betta{\ensuremath{\boldsymbol\eta}}
\newcommand*\bxi{\ensuremath{\boldsymbol\xi}}
\newcommand*\grad{\nabla}
\newcommand\dd{\partial}
\newcounter{dbaCounter}
\newcounter{garCounter}
\newcounter{wojCounter}
\newcounter{maxCounter}
\newcommand{\wt}{{\mathbf w}}
\newcommand{\bJ}{{\mathbf J}}
\newcommand{\bS}{{\mathbf S}}
\newcommand{\vt}{{\mathbf v}}
\newcommand{\x}{{\mathbf x}}
\newcommand{\bZ}{{\mathbf 0}}
\newcommand{\bR}{{\mathbb R}}
\newcommand{\pfc}{{\mathfrak f}}
\newcommand{\profit}{{\pi}}
\newcommand{\bA}{{\mathbf A}}
\newcommand{\bv}{{\mathbf v}}
\newtheorem{thm}{Theorem}
\newtheorem{prop}[thm]{Proposition}
\newtheorem{lem}[thm]{Lemma}
\newtheorem{defn}{Definition}
\newtheorem{eg}{Example}
\newtheorem{thm*}{Theorem}
\newtheorem{cor*}[thm*]{Corollary}
\newtheorem{prop*}[thm*]{Proposition}
\newtheorem{lem*}[thm*]{Lemma}
\title{Smooth markets: A basic mechanism for\\organizing gradient-based learners}
\author{David Balduzzi$^1$, Wojciech M Czarnecki$^1$, Thomas W Anthony$^1$, Ian M Gemp$^1$,\\
\textbf{Edward Hughes$^1$, Joel Z Leibo}$^1$, \textbf{Georgios Piliouras}$^2$, \textbf{Thore Graepel}$^1$ \\
$^1$DeepMind \texttt{<google.com>}; $^2$Singapore University of Technology and Design \texttt{<sutd.edu.sg>}\\
\{\texttt{dbalduzzi,lejlot,twa,imgemp,edwardhughes,jzl,georgios,thore\}@$^*$}
}
\begin{document}

\maketitle

\begin{abstract}
With the success of modern machine learning, it is becoming increasingly important to understand and control how learning algorithms interact. Unfortunately, negative results from game theory show there is little hope of understanding or controlling general $n$-player games. We therefore introduce smooth markets (SM-games), a class of $n$-player games with pairwise zero sum interactions. SM-games codify a common design pattern in machine learning that includes (some) GANs, adversarial training, and other recent algorithms. We show that SM-games are amenable to analysis and optimization using first-order methods.\end{abstract}


\emph{%
``I began to see \textbf{legibility} as a central problem in modern statecraft. The premodern state was, in many respects, partially blind [$\ldots$] It lacked anything like a detailed `map' of its terrain and its people.  It lacked, for the most part, a measure, a metric that would allow it to `translate' what it knew into a common standard necessary for a synoptic view. As a result, its interventions were often crude and self-defeating.''}\\
-- from \textbf{Seeing like a State} by \citet{scott:99}

\section{Introduction}
\label{s:intro}

As artificial agents proliferate, it is increasingly important to analyze, predict and control their collective behavior \citep{parkes:15, rahwan:19}. Unfortunately, despite almost a century of intense research since \citet{vonneumann:28}, game theory provides little guidance outside a few special cases such as two-player zero-sum, auctions, and potential games \citep{vonneumann:44, vickrey:61, monderer:96, agt:07}. Nash equilibria provide a general solution concept, but are intractable in almost all cases for many different reasons \citep{hart:03a, daskalakis:09, babichenko:16}. These and other negative results \citep{palaiopanos:17} suggest that understanding and controlling societies of artificial agents is near hopeless. Nevertheless, human societies -- of billions of agents -- manage to organize themselves reasonably well and mostly progress with time, suggesting game theory is missing some fundamental organizing principles. 

In this paper, we investigate how markets structure the behavior of agents. Market mechanisms have been studied extensively \citep{agt:07}. However, prior work has restricted to concrete examples, such as auctions and prediction markets, and strong assumptions, such as convexity. Our approach is more abstract and more directly suited to modern machine learning where the building blocks are neural nets. Markets, for us, encompass discriminators and generators trading errors in GANs \citep{goodfellow:14} and agents trading wins and losses in StarCraft \citep{alphastarblog}. 

\subsection{Overview}
\label{s:contrib}

The paper introduces a class of games where \textbf{optimization and aggregation make sense}. The phrase requires unpacking. ``Optimization'' means gradient-based methods. Gradient descent (and friends) are the workhorse of modern machine learning. Even when gradients are not available, gradient \emph{estimates} underpin many reinforcement learning and evolutionary algorithms. ``Aggregation'' means weighted sums. Sums and averages are the workhorses for analyzing ensembles and populations across many fields. ``Makes sense'' means we can draw conclusions about the gradient-based dynamics of the collective by summing over properties of its members.

As motivation, we present some pathologies that arise in even the simplest smooth games. Examples in section~\ref{s:smooth} show that coupling strongly concave profit functions to form a game can lead to uncontrolled behavior, such as spiraling to infinity and excessive sensitivity to learning rates. Hence, one of our goals is to understand how to `glue together agents' such that their collective behavior is predictable.

Section~\ref{s:nz_games} introduces a class of games where simultaneous gradient ascent behaves well and is amenable to analysis. In a \textbf{smooth market (SM-game)}, each player's profit is composed of a personal objective and pairwise zero-sum interactions with other players. Zero-sum interactions are analogous to monetary exchange (my expenditure is your revenue), double-entry bookkeeping (credits balance debits), and conservation of energy (actions cause equal and opposite reactions). \emph{SM-games explicitly account for externalities}.  Remarkably, building this simple bookkeeping mechanism into games has strong implications for the dynamics of gradient-based learners. SM-games generalize adversarial games \citep{cai:16} and codify a common \textbf{design pattern} in machine learning, see section~\ref{s:egs}. 

Section~\ref{s:mm} studies  SM-games from two points of view. Firstly, from that of a rational, profit-maximizing agent that makes decisions based on first-order profit \textbf{forecasts}. Secondly, from that of the game as a whole. SM-games are not potential games, so the game does not optimize any single function. A collective of profit-maximizing agents is \emph{not} rational because they do not optimize a shared objective \citep{drexler:19}. We therefore introduce the notion of \textbf{legibility}, which quantifies how the dynamics of the collective relate to that of individual agents. 

Finally, section~\ref{s:dyn} applies legibility to prove some basic theorems on the dynamics of SM-games under gradient-ascent. We show that \textbf{(i)} Nash equilibria are stable; \textbf{(ii}) that if profits are strictly concave then gradient ascent converges to a Nash equilibrium for all learning rates; and \textbf{(iii)} the dynamics are bounded under reasonable assumptions.

The results are important for two reasons. Firstly, we identify a class of games whose dynamics are, at least in some respects,  amenable to analysis and control. The kinds of pathologies described in section~\ref{s:smooth} cannot arise in SM-games. Secondly, we identify the specific quantities, forecasts, that are useful to track at the level of individual firms and can be meaningfully aggregated to draw conclusions about their global dynamics. It follows that forecasts should be a useful lever for mechanism design.

\subsection{Related work}
\label{s:rel}

A wide variety of machine learning markets and agent-based economies have been proposed and studied:  \citet{selfridge:58, barto:83, minsky:86, wellman:98, baum:99, kwee:01, kearns:01, kakade:03, kakade:05, lay:10, sutton:11, abernethy:11a, storkey:11, storkey:12, hu:14, balduzzi:14cpm}. The goal of this paper is different. Rather than propose another market mechanism, we abstract an existing design pattern and elucidate some of its consequences for interacting agents. 

Our approach draws on work studying convergence in generative adversarial networks \citep{mescheder:17, mescheder:18, dgm:18, gemp:18, gidel:19}, related minimax problems \citep{bailey:18, abernethy:19}, and monotone games \citep{nemirovski:10, gemp:17, tatarenko:19}.

\subsection{Caveat}
\label{s:cav}

We consider dynamics in continuous time $\frac{d\wt}{dt}=\bxi(\wt)$ in this paper. Discrete dynamics, $\wt_{t+1}\leftarrow\wt_t+\bxi(\wt)$ require a more delicate analysis, e.g. \citet{bailey:19}. In particular, we do not claim that optimizing GANs and SM-games is easy in discrete time. Rather, our analyis shows that it is relatively easy in continuous time, and therefore possible in discrete time, with some additional effort. The contrast is with smooth games in general, where gradient-based methods have essentially no hope of finding local Nash equilibria even in continuous time.

\subsection{Notation}
\label{s:notation}

Vectors are column-vectors. The notations $\bS\succ\bZ$ and $\vt\succ\bZ$  refer to a positive-definite matrix and vector with all entries positive respectively. Rather than losses, we work with profits. Proofs are in the appendix.
We use economic terminology (firms, profits, forecasts, and sentiment) even though the examples of SM-games, such as GANs and adversarial training, are taken from mainstream machine learning. We hope the economic terminology provides an invigorating change of perspective. The underlying mathematics is no more than first and second-order derivatives.

\begin{center}
\begin{tabular}{ l l l}
    \hline
    profit of firm $i$ & $\pi_i(\wt)$ & \eqref{eq:profit} \\ 
    gradient of profit & $\bxi_i(\wt) := \grad_{\wt_i}\pi_i(\wt)$  \\
     profit forecast & $\pfc_{\vt_i}(\wt) := \vt_i^\intercal\cdot \bxi_i(\wt)$ & \eqref{eq:firm_taylor} \\  
     aggregate forecast & $\sum_i \pfc_{\vt_i}(\wt_i)$ & \eqref{eq:taylor_market} \\
    sentiment of firm $i$ & $\vt_i^\intercal\cdot \grad_{\wt_i}\pfc_{\vt_i}(\wt)$ & \eqref{eq:sentiment}  \\
    \hline
\end{tabular}
\end{center}

\section{Smooth games}
\label{s:smooth}

\begin{figure*}
    \center
    \includegraphics[width=.99\textwidth]{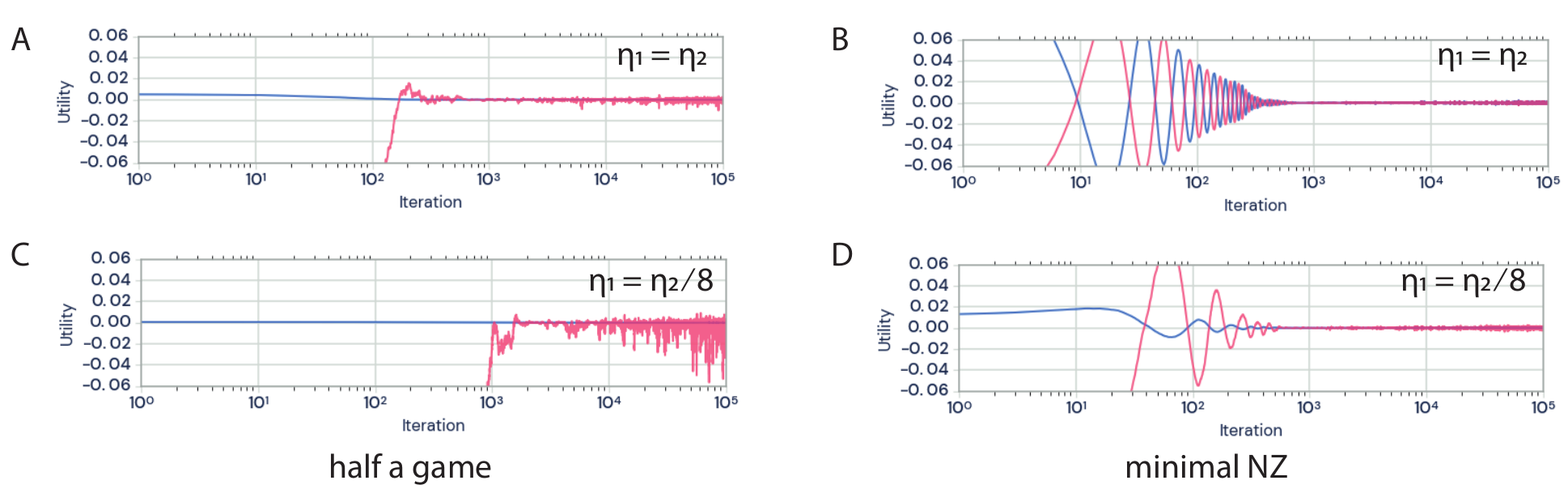}
    \vspace{-3mm}
    \caption{\textbf{Effect of learning rates in two games.} Note: $x$-axis is $\log$-scale. \emph{Left:}  ``half a game'', e.g.~\ref{eg:lr}. \emph{Right:} minimal SM-game, e.g.~\ref{eg:min}.  \emph{Top:} Both players have same learning rate. \emph{Bottom:} Second player has $\frac{1}{8}$ learning rate of first (which is same as for top). Reducing the learning rate of the second player destabilizes the dynamics in ``half a game'', whereas the SM-game is essentially unaffected.
    }
    \label{f:learning_rates}
\end{figure*}

Smooth games model interacting agents with differentiable objectives. They are the kind of games that are played by neural nets. In practice, the differentiability assumption can be relaxed by replacing gradients with gradient estimates.

\begin{defn}\label{def:diff_game}
	A \textbf{smooth game} \citep{letcher:19a} consists in $n$ players $[n]=\{1,\ldots,n\}$, equipped with twice continuously differentiable profit functions $\{\pi_i:\bR^d\rightarrow \bR\}_{i=1}^n$. The parameters are $\wt = (\wt_1,\ldots, \wt_n)\in\bR^d$ with $\wt_i\in\bR^{d_i}$ where $\sum_{i=1}^n d_i=d$. Player $i$ controls the parameters $\wt_i$.

	If players update their actions via \textbf{simultaneous gradient ascent}, then a smooth game yields a \textbf{dynamical system} specified by the differential equation $\frac{d \wt}{d t} = \bxi(\wt)$ for
	\begin{equation}
		\label{eq:sgrad}
		\bxi(\wt) 
		:= \Big(\bxi_1(\wt), \ldots, \bxi_n(\wt)\Big)
	\end{equation}
	 where $\bxi_i(\wt) := \grad_{\wt_i}\pi_i(\wt)$ is a $d_i$-vector. The \textbf{Jacobian} of a game is the $(d\times d)$-matrix of second-derivatives $\bJ(\wt) := \Big(\frac{\dd\xi_\alpha(\wt)}{\dd w_\beta}\Big)_{\alpha,\beta=1}^d$. The setup can be recast in terms of minimizing losses by substituting $\ell_i := -\pi_i$ for all $i$.
\end{defn}

Smooth games are too general to be tractable since they encompass all dynamical systems. 
\begin{lem}\label{lem:all_ds}
    Every continuous dynamical system on $\bR^d$, for any $d$, arises as simultaneous gradient ascent on the profit functions of a smooth game.
\end{lem}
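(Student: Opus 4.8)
The plan is to exhibit, for an arbitrary continuous-time dynamical system $\frac{d\wt}{dt} = \bF(\wt)$ on $\bR^d$ with $\bF = (F_1,\ldots,F_d)$, a smooth game whose simultaneous-gradient-ascent field $\bxi$ reproduces $\bF$ exactly. The only design freedom is the number of players, the partition of the $d$ coordinates among them, and the profit functions; the constraint imposed by Definition~\ref{def:diff_game} is that player $i$'s contribution to $\bxi$ is $\bxi_i = \grad_{\wt_i}\pi_i$, the gradient of $\pi_i$ taken \emph{only} with respect to the block $\wt_i$ that player $i$ controls.

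First I would take the finest possible partition: $n = d$ players, each controlling a single scalar coordinate, so that $d_i = 1$ and $\wt_i = w_i$. With this choice $\bxi_i(\wt) = \frac{\dd \pi_i}{\dd w_i}(\wt)$ is just the scalar partial derivative of $\pi_i$ in its own variable, and the requirement $\bxi = \bF$ collapses to the $d$ decoupled scalar equations $\frac{\dd \pi_i}{\dd w_i} = F_i$ for $i\in[d]$. Each of these can be solved by integrating the $i$-th component of the field along the $i$-th axis:
\[
\pi_i(\wt) := \int_0^{w_i} F_i(w_1,\ldots,w_{i-1},s,w_{i+1},\ldots,w_d)\, ds .
\]
By the fundamental theorem of calculus $\frac{\dd \pi_i}{\dd w_i}(\wt) = F_i(\wt)$, so $\bxi = \bF$ and the induced dynamics coincide with the prescribed system. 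Differentiating under the integral sign shows $\pi_i$ inherits the regularity of $\bF$: a $C^1$ (resp. smooth) field yields $C^2$ (resp. smooth) profits, so the construction genuinely produces a smooth game.

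The main thing to get right is conceptual rather than computational. A single-player version ($n=1$) would force $\bF = \grad\pi$ to be a gradient field, hence curl-free, which excludes almost every vector field; this is exactly the integrability obstruction one might expect to block the lemma. Splitting the coordinates among $d$ independent players dissolves that obstruction, because each $\pi_i$ need only reproduce one partial derivative and is never required to be consistent with the others as components of a single global gradient. I therefore expect the only genuinely delicate points to be the regularity bookkeeping (ensuring $\pi_i\in C^2$, using differentiation under the integral) and the observation that the off-diagonal second derivatives $\frac{\dd^2 \pi_i}{\dd w_i\, \dd w_j}$ are harmless — they populate the Jacobian $\bJ$ but play no role in $\bxi$ itself, which is all that must match $\bF$.
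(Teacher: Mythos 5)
Your construction is exactly the paper's proof: $d$ single-coordinate players with $\pi_i(\wt) := \int_0^{w_i} \xi_i(w_1,\ldots,w_{i-1},s,w_{i+1},\ldots,w_d)\,ds$ and the fundamental theorem of calculus. Your added remarks on regularity and on why a single player would face a curl-type obstruction are sound and, if anything, slightly more careful than the paper's own two-line argument.
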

The next two sections illustrate some problems that arise in simple smooth games. 
\begin{defn}
	We recall some solution concepts from dynamical systems and game theory:
	\begin{itemize}
		\item A \textbf{\emph{stable fixed point}}\footnote{\citet{berard:19} use a different notion of stable fixed point that requires $\bJ$ has positive eigenvalues.} $\wt^*$ satisfies $\bxi(\wt^*)=0$ and $\vt^\intercal\cdot \bJ(\wt^*)\cdot\vt < 0$ for all vectors $\vt\neq \bZ$. 
        \item A \textbf{\emph{local Nash equilibrium}} $\wt^*$ has \emph{neighborhoods} $U_i$ of $\wt_i^*$ for all $i$, such that $\pi_i(\wt_i,\wt_{-i}^*) < \pi_i(\wt^*_i, \wt^*_{-i})$ all $\wt_i\in U_i\setminus\{\wt_i^*\}$.
	   \item A \textbf{\emph{classical Nash equilibrium}} $\wt^*$ satisfies $\pi_i(\wt_i,\wt_{-i}^*) \leq \pi_i(\wt^*_i, \wt^*_{-i})$ for all $\wt_i$ and all players $i$.
        \end{itemize}    
\end{defn}
Example~\ref{eg:potential} below shows that stable fixed points and local Nash equilibria do not necessarily coincide. The notion of classical Nash equilibrium is ill-suited to nonconcave settings.

Intuitively, a fixed point is stable if all trajectories sufficiently nearby flow into it. A joint strategy is a local Nash if each player is harmed if it makes a small unilateral deviation. Local Nash differs from the classic definition in two ways. It is weaker, because it only allows \emph{small} unilateral deviations. This is necessary since players are neural networks and profits are not usually concave. It is also stronger, because unilateral deviations decrease (rather than \emph{not increase}) profits.

\subsection{Problems with potential games}
\label{s:pot}

A game is a potential game if $\bxi=\grad\phi$ for some function $\phi$, see \citet{dgm:18} for details.

\begin{eg}[potential game]\label{eg:potential}
	Fix a small $\epsilon>0$. Consider the two-player games with profit functions
	\begin{equation}
	    \pi_1(\wt) = w_1w_2-\frac{\epsilon}{2} w_1^2
	    \text{ and } 
	    \pi_2(\wt) = w_1w_2 - \frac{\epsilon}{2}w_2^2.
	 \end{equation}
	 The game has a unique local Nash equilibrium at $\wt=(0,0)$ with $\pi_1(0,0)=0=\pi_2(0,0)$.
\end{eg}
The game is chosen to be as nice as possible: $\pi_1$ and $\pi_2$ are strongly concave functions of $w_1$ and $w_2$ respectively. The game is a \textbf{potential game} since $\bxi = (w_2 - \epsilon w_1, w_1-\epsilon w_2) = \grad\phi$ for $\phi(\wt) = w_1w_2 - \frac{\epsilon}{2} (w_1^2 + w_2^2)$.  Nevertheless, the game exhibits three related problems. 

Firstly, \emph{the Nash equilibrium is unstable.} 
        Players at the Nash equilibrium can increase their profits via the joint update $\wt \leftarrow (0,0) + \eta\cdot (1,1)$, so $\pi_1(\wt) = \eta(1-\frac{\epsilon}{2}) = \pi_2(\wt) > 0$.
        The existence of a Nash equilibrium where players can improve their payoffs by \emph{coordinated action} suggests the incentives are not well-designed. 

Secondly, \emph{the dynamics can diverge to infinity.}
        Starting at $\wt^{(1)}=(1,1)$ and applying simultaneously gradient ascent causes the norm of vector $\|\wt^{(t)}\|_2$ to increase without limit as $t\rightarrow\infty$ -- and at an accelerating rate -- due to a positive feedback loop between the players' parameters and profits. 
Finally, \emph{players impose externalities on each other.}
         The decisions of the first player affect the profits of the second, and vice versa. Obviously players must interact for a game to be interesting. However, positive feedback loops arise because the interactions are not properly accounted for.

In short, simultaneous gradient ascent does not converge to the Nash -- and can diverge to infinity. It is open to debate whether the fault lies with gradients, the concept of Nash, or the game structure. In this paper, we take gradients and Nash equilibria as given and seek to design better games.

\subsection{Problems with learning rates}
\label{s:lr}

Gradient-based optimizers rarely follow the actual gradient. For example RMSProp and Adam use adaptive, parameter-dependent learning rates. This is not a problem when optimizing a function. Suppose $f(\wt)$ is optimized with reweighted gradient $(\grad f)_{\betta} := (\eta_1 \grad_1 f,\ldots, \eta_n \grad_n f)$ where $\betta\succ\bZ$ is a vector of learning rates. Even though $(\grad f)_{\betta}$ is not necessarily the gradient of \emph{any} function, it behaves \emph{like} $\grad f$ because they have positive inner product when $\grad f\neq \bZ$:
\begin{equation}
	\label{eq:potential_grad}
	(\grad f)_{\betta}^\intercal\cdot \grad f
	= \sum_i \eta_i \cdot(\grad_i f)^2 > 0,
	\,\text{ since }\eta_i>0 \text{ for all }i.
\end{equation}
Parameter-dependent learning rates thus behave well in \emph{potential} games where the dynamics derive from an implicit potential function $\bxi(\wt)=\grad\phi(\wt)$. Severe problems can arise in general games.

\begin{eg}[``half a game'']\label{eg:lr}
    Consider the following game, where the $w_2$-player is indifferent to $w_1$:
    \begin{equation}
        \pi_1(\wt) = w_1w_2 - \frac{\epsilon}{2} w_1^2 
        \text{ and } 
        \pi_2(\wt) = -\frac{\epsilon}{2}w_2^2.
    \end{equation}
\end{eg}
The dynamics are clear by inspection: the $w_2$-player converges to $w_2=0$, and then the $w_1$-player does the same. It is hard to imagine that anything could go wrong. In contrast, behavior in the next example should be worse because convergence is slowed down by cycling around the Nash:

\begin{eg}[minimal SM-game]\label{eg:min}
    A simple SM-game, see definition~\ref{def:nz_game}, is
    \begin{equation}
        \pi_1(\wt) = w_1w_2 - \frac{\epsilon}{2} w_1^2 
        \text{ and }
        \pi_2(\wt) = - w_1w_2 - \frac{\epsilon}{2}w_2^2.
    \end{equation}
\end{eg}

Figure~\ref{f:learning_rates} shows the dynamics of the games, in discrete time, with small learning rates and small gradient noise. In the top panel, both players have the same learning rate. Both games converge.  Example~\ref{eg:lr} converges faster -- as expected -- without cycling around the Nash. 

In the bottom panels, the learning rate of the second player is decreased by a factor of eight. The SM-game's dynamics do not change significantly. In contrast, the dynamics of example~\ref{eg:lr} become unstable: although player 1 is attracted to the Nash, it is extremely sensitive to noise and does not stay there for long.  One goal of the paper is to explain why SM-games are more robust, in general, to differences in relative learning rates.

\subsection{\texttt{Stop\_gradient} and learning rates}
\label{s:stop}

Tools for automatic differentiation (AD) such as TensorFlow and PyTorch include \texttt{stop\_gradient} operators that stop gradients from being computed. For example, let $f(\wt) = w_1\cdot \texttt{stop\_gradient}(w_2)-\frac{\epsilon}{2}(w_1^2+w_2^2)$. The use of \texttt{stop\_gradient} means $f$ is not strictly speaking a function and so we use $\grad_{\text{AD}}$ to refer to its gradient under automatic differentiation. Then
\begin{equation}
    \grad_{\text{AD}} f(\wt) = (w_2-\epsilon w_1, -\epsilon w_2)
\end{equation}
which is the simultaneous gradient from example~\ref{eg:lr}. Any smooth vector field is the gradient of a function augmented with $\texttt{stop\_gradient}$ operators, see appendix~\ref{s:sg}. 
\texttt{Stop\_gradient} is often used in complex neural architectures (for example when one neural network is fed into another leading to multiplicative interactions), and is thought to be mostly harmless. Section~\ref{s:lr} shows that \texttt{stop\_gradient}s can interact in unexpected ways with parameter-dependent learning rates.

\subsection{Summary}
\label{s:summary}

It is natural to expect individually well-behaved agents to also behave well collectively. Unfortunately, this basic requirement fails in even the simplest examples. 

Maximizing a strongly concave function is well-behaved: there is a unique, \emph{finite} global maximum. However, example~\ref{eg:potential} shows that coupling concave functions can cause simultaneous gradient ascent to diverge to infinity. The dynamics of the game \textbf{differs in kind} from the dynamics of the players in isolation.
Example~\ref{eg:lr} shows that \emph{reducing} the learning rate of a well-behaved (strongly concave) player in a simple game destabilizes the dynamics. How collectives behave is sensitive not only to profits, but also to relative learning rates. Off-the-shelf optimizers such as Adam \citep{kingma:15} modify learning rates under the hood, which may destabilize some games.

\section{Smooth Markets (SM-games)}
\label{s:nz_games}

Let us restrict to more structured games. Take an accountant's view of the world, where the only thing we track is the flow of money. Interactions are pairwise. Money is neither created nor destroyed, so interactions are zero-sum. If we model the interactions between players by differentiable functions $g_{ij}(\wt_i,\wt_j)$ that depend on their respective strategies then we have an SM-game. All interactions are explicitly tracked. There are no externalities off the books. Positive interactions, $g_{ij}>0$, are revenue, negative are costs, and the difference is profit. The model prescribes that all firms are profit maximizers. More formally:

\begin{defn}[SM-game]\label{def:nz_game}
    A \textbf{smooth market} is a smooth game where interactions between players are pairwise zero-sum. The profits have the form
    \begin{equation}
        \label{eq:profit}
        \profit_i(\wt) 
        = f_i(\wt_i) + \sum_{j\neq i} g_{ij}(\wt_i,\wt_j)
    \end{equation}
    where $g_{ij}(\wt_i,\wt_j) + g_{ji}(\wt_j,\wt_i)\equiv 0$ for all $i,j$.
\end{defn}
The functions $f_i$ can act as regularizers. Alternatively, they can be interpreted as natural resources or dummy players that react too slowly to model as players. Dummy players provide firms with easy (non-adversarial) sources of revenue.

Humans, unlike firms, are not profit-maximizers; humans typically buy goods because they value them more than the money they spend on them. Appendix~\ref{s:ev} briefly discusses extending the model.

\subsection{Examples of SM-games}
\label{s:egs}
SM-games codify a common design pattern:
\begin{enumerate}
    \item \emph{Optimizing a function.} A near-trivial case is where there is a single player with profit $\pi_1(\wt) = f_1(\wt)$.
    
	\item \emph{Generative adversarial networks}
	 and related architectures like CycleGANs are zero or near zero sum \citep{goodfellow:14, zhu:17, wu:19}.
	
	\item \emph{Zero-sum polymatrix games}
	are SM-games where $f_i(\wt_i)\equiv0$ and $g_{ij}(\wt_i,\wt_j) = \wt_i^\intercal\bA_{ij}\wt_j$ for some matrices $\bA_{ij}$. Weights are constrained to probability simplices. The games have nice properties including: Nash equilibria are computed via a linear program and correlated equilibria marginalize onto Nash equilibria \citep{cai:16}.
	
	\item \emph{Intrinsic curiosity modules}
	use games to drive exploration. One module is rewarded for predicting the environment and an adversary is rewarded for choosing actions whose outcomes are \emph{not} predicted by the first module \citep{pathak:17}. The modules share some weights, so the setup is nearly, but not exactly, an SM-game.
	
	\item \emph{Adversarial training}
	is concerned with the minmax problem \citep{kurakin:17, madry:18}
	\begin{equation}
    	\min_{\wt\in W} \sum_i\left[ \max_{\bdelta_i\in B_\epsilon}\, \ell\Big(f_\wt(\x_i+\bdelta_i),y_i\Big)\right].
	\end{equation}
	Setting $g_{0i}(\wt_0, \bdelta_i)$ = $\ell\big(f_{\wt_0}(\x_i+\bdelta_i),y_i\big)$ obtains a star-shaped SM-game with the neural net (player 0) at the center and $n$ adversaries -- one per datapoint $(\x_i,y_i)$ -- on the arms.
	
	\item \emph{Task-suites}
	where a population of agents are trained on a population of tasks, form a bipartite graph. If the tasks are \emph{parametrized} and \emph{adversarially} rewarded based on their difficulty for agents, then the setup is an SM-game. 
	
	\item \emph{Homogeneous games}
	arise when all the coupling functions are equal up to sign (recall $g_{ij} = -g_{ji})$. An example is population self-play~\citep{DSilverHMGSDSAPL16, alphastarblog} which lives on a graph where $g_{ij}(\wt_i,\wt_j):=P(\wt_i \text{ beats }\wt_j)-\frac{1}{2}$ comes from the probability that policy $\wt_i$ beats  $\wt_j$. 
\end{enumerate}
Monetary exchanges in SM-games are quite general. The error signals traded between generators and discriminators and the wins and losses traded between agents in StarCraft are two very different special cases. 

\begin{figure}
    \center
    \includegraphics[width=.77\textwidth]{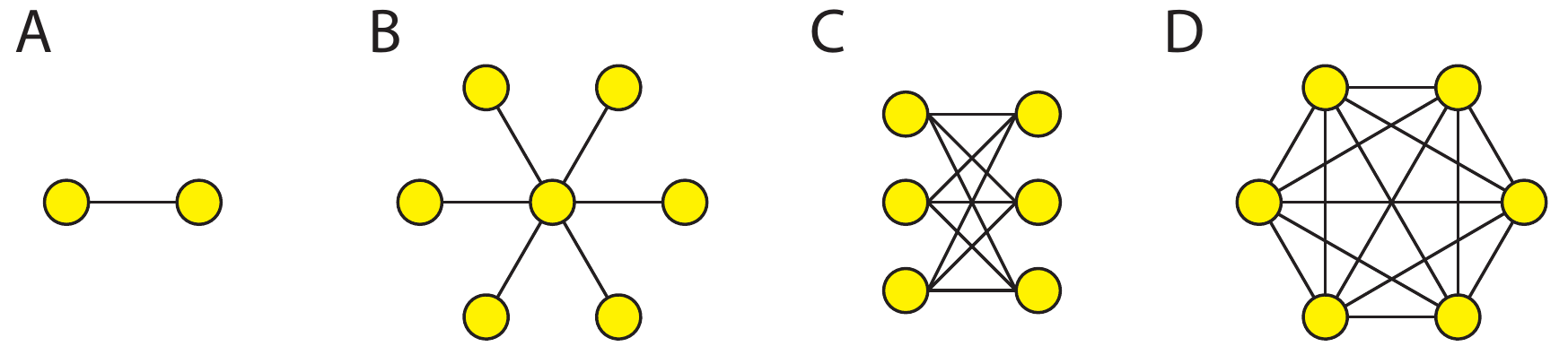}
    \vspace{-3mm}
    \caption{\textbf{SM-game graph topologies.} 
    \emph{A:} two-player (e.g. GANs).
    \emph{B:} star-shaped (e.g. adversarial training).
    \emph{C:} bipartite (e.g. task-suites).
    \emph{D:} all-to-all.
    }
    \label{f:graphs}
\end{figure}

\section{From Micro to Macro}
\label{s:mm}

How to analyze the behavior of the market as a whole? Adam Smith claimed that profit-maximizing leads firms to promote the interests of society, as if by an invisible hand \citep{smith:1776}. More formally, we can ask: Is there a measure that firms collectively increase or decrease? It is easy to see that firms do not collectively maximize aggregate profit (AP) or aggregate revenue (AR):
\begin{equation}
	\text{AP}(\wt) := \sum_i \profit_i(\wt) = \sum_i f_i(\wt_i)
	\quad\quad
	\text{AR~}(\wt) := \sum_{i,j} \max\Big(g_{ij}(\wt_i,\wt_j),0\Big).
\end{equation}
Maximizing aggregate profit would require firms to ignore interactions with other firms. Maximizing aggregate revenue would require firms to ignore costs. In short, SM-games are \emph{not} potential games; there is no function that they optimize in general. However, it turns out the dynamics of SM-games aggregates the dynamics of individual firms, in a sense made precise in section~\ref{s:leg}. 

\subsection{Rationality: Seeing like a Firm}

Give an objective function to an agent. The \textbf{agent is rational}, relative to the objective, if it chooses actions because it forecasts they will lead to better outcomes as measured by the objective. In SM-games, agents are firms, the objective is profit, and forecasts are computed using gradients. 

Firms aim to increase their profit. Applying the first-order Taylor approximation obtains
\begin{equation}
	\label{eq:firm_taylor}
 	\underbrace{\profit_i(\wt + \vt_i) - \profit_i(\wt)}_{\text{change in profit}}\quad
 	= \underbrace{\vt_i^\intercal \bxi_i(\wt)}_{\text{profit forecast}}
 	+ \quad \{\text{h.o.t.}\},
 \end{equation}
where $\{h.o.t.\}$ refers to higher-order terms. Firm $i$'s \textbf{forecast} of how profits will change if it modifies production by $\vt_i$ is $\pfc_{\vt_i}(\wt) := \vt_i^\intercal \bxi_i(\wt)$. The Taylor expansion implies that $\pfc_{\vt_i}(\wt) \approx \profit_i(\wt + \vt_i) - \profit_i(\wt)$ for small updates $\vt_i$. Forecasts encode how individual firms expect profits to change \emph{ceteris paribus}\footnote{All else being equal --  i.e. without taking into account updates by other firms.}.

\subsection{Profit changes do not add up}
\label{s:pm}

How does profit maximizing by individual firms look from the point of view of the market as a whole? Summing over all firms obtains
\begin{equation}
	\label{eq:taylor_market}
	\underbrace{\sum_i\Big[\profit_i(\wt+\vt_i) - \profit_i(\wt) \Big]}_{\text{\sout{aggregate change in profit}}}
	= \underbrace{\sum_i \pfc_{\vt_i}(\wt)}_{\text{aggregate forecast}}
 	+ \quad\{\text{h.o.t.}\}	
\end{equation}
where $\pfc_\vt(\wt) = \sum_i \pfc_{\vt_i}(\wt)$ is the \textbf{aggregate forecast}. Unfortunately, the left-hand side of Eq.~\eqref{eq:taylor_market} is incoherent. It sums the \emph{changes in profit that would be experienced by firms updating their production in isolation}. However, firms change their production simultaneously. Updates are \textbf{not} \emph{ceteris paribus} and so profit is not a meaningful macroeconomic concept. The following minimal example illustrates the problem:

\begin{eg}
	Suppose $\profit_1(\wt) = w_1w_2$ and $\profit_2(\wt)=-w_1w_2$. Fix $\wt=(w_1,w_2)$ and let $\vt = (w_2, -w_1)$. The sum of the changes in profit expected by the firms, reasoning in isolation, is
	\begin{equation}
		\Big[\profit_1(\wt+\vt_1) - \profit_1(\wt) + \profit_2(\wt+\vt_2) - \profit_2(\wt) \Big]
		= w_2^2 + w_1^2 > 0
	\end{equation}
	whereas the actual change in aggregate profit is zero because $\profit_1(\x)+\profit_2(\x) =0$ for any $\x$.
\end{eg}
Tracking aggregate profits is therefore not useful. The next section shows forecasts are better behaved.

\subsection{Legibility: Seeing like an Economy}
\label{s:leg}

Give a target function to every agent in a collective. The \textbf{collective is legible}, relative to the targets, if it increases or decreases the aggregate target according to whether its members forecast, on aggregate, they will increase or decrease their targets. We show that SM-games are legible. The targets are profit forecasts (note: \emph{not} profits).

Let us consider how forecasts change. Define the \textbf{sentiment} as the directional derivative of the forecast $D_{\vt_i}\pfc_{\vt_i}(\wt) = \vt_i^\intercal\grad \pfc_{\vt_i}(\wt)$. The first-order Taylor expansion of the forecast shows that the sentiment is a forecast about the profit forecast:
\begin{equation}
    \label{eq:sentiment}
    \underbrace{\pfc_{\vt_i}(\wt+\vt_i) - \pfc_{\vt_i}(\wt)}_{\text{change in profit forecast}} = 
    \underbrace{\vt_i^\intercal \grad\pfc_{\vt_i}(\wt)}_{\text{sentiment}} +\,\, \{\text{h.o.t.}\}.
\end{equation}

The perspective of firms can be summarized as:
\begin{enumerate}
	\item Choose an update direction $\vt_i$ that is forecast to increase profit. 
	\item The firm is then in one of two main regimes:
	\begin{enumerate}[a.]
		\item If sentiment is positive then forecasts increase as the firm modifies its production -- forecasts become more optimistic. The firm experiences \textbf{increasing returns-to-scale}. 

		\item If sentiment is negative then forecasts decrease as the firm modifies its production -- forecasts become more pessimistic. The firm experiences \textbf{diminishing returns-to-scale}.
		
	\end{enumerate}
\end{enumerate}
Our main result is that sentiment is additive, which means that forecasts are legible:

\begin{prop}[forecasts are legible in SM-games]\label{prop:macro}
	Sentiment is additive
	\begin{equation}
	    D_{\vt}\pfc_{\vt}(\wt)
	    = \sum_i D_{\vt_i}\pfc_{\vt_i}(\wt).
	\end{equation}
	Thus, the aggregrate profit forecast $\pfc_\vt$ increases or decreases according to whether individual forecasts $\pfc_{\vt_i}$ are expected to increase or decrease in aggregate.
\end{prop}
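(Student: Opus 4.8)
The plan is to recast both sides of the identity as quadratic forms in the game Jacobian $\bJ(\wt)$ and then exploit the pairwise zero-sum structure. First I would observe that the aggregate forecast is simply $\pfc_\vt(\wt)=\sum_i\vt_i^\intercal\bxi_i(\wt)=\vt^\intercal\bxi(\wt)$, where $\vt=(\vt_1,\ldots,\vt_n)$ and $\bxi=(\bxi_1,\ldots,\bxi_n)$ are the concatenated update and simultaneous gradient. Differentiating in the direction $\vt$ and using $\bJ_{\alpha\beta}=\dd\xi_\alpha/\dd w_\beta$ gives the left-hand side as
\begin{equation}
D_\vt\pfc_\vt(\wt) = \vt^\intercal\bJ(\wt)\,\vt = \sum_{k,i}\vt_k^\intercal\bJ_{ki}(\wt)\,\vt_i,
\end{equation}
where $\bJ_{ki}$ is the $(k,i)$ block of $\bJ$. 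For the right-hand side, since $D_{\vt_i}$ only perturbs block $i$, each sentiment is the corresponding block-diagonal form, $D_{\vt_i}\pfc_{\vt_i}(\wt)=\vt_i^\intercal\bJ_{ii}(\wt)\,\vt_i$ (using $\grad_{\wt_i}\pfc_{\vt_i}=\bJ_{ii}^\intercal\vt_i$ and that a scalar equals its transpose). Thus the proposition is equivalent to the vanishing of the off-diagonal contribution $\sum_{k\neq i}\vt_k^\intercal\bJ_{ki}\vt_i$.

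Next I would identify those off-diagonal blocks using the SM-game form $\profit_k(\wt)=f_k(\wt_k)+\sum_{j\neq k}g_{kj}(\wt_k,\wt_j)$. Since $\bJ_{ki}=\grad_{\wt_i}\grad_{\wt_k}\profit_k$ and, for $k\neq i$, neither $f_k$ nor any $g_{kj}$ with $j\neq i$ depends on $\wt_i$, only the single coupling term survives, so $\bJ_{ki}=\grad_{\wt_i}\grad_{\wt_k}g_{ki}$. The crux is to relate the $(k,i)$ and $(i,k)$ blocks. Differentiating the zero-sum constraint $g_{ki}(\wt_k,\wt_i)+g_{ik}(\wt_i,\wt_k)\equiv 0$ once in $\wt_k$ and once in $\wt_i$, and invoking equality of mixed partials (legitimate because the game is twice continuously differentiable, per Definition~\ref{def:diff_game}), I would obtain the block antisymmetry $\bJ_{ki}=-\bJ_{ik}^\intercal$ for every $k\neq i$.

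Finally I would conclude by noting that the matrix obtained from $\bJ$ by zeroing its diagonal blocks is antisymmetric, so its quadratic form vanishes identically; equivalently, pairing the $(k,i)$ and $(i,k)$ terms gives $\vt_k^\intercal\bJ_{ki}\vt_i+\vt_i^\intercal\bJ_{ik}\vt_k=\vt_k^\intercal\bJ_{ki}\vt_i-\vt_k^\intercal\bJ_{ki}\vt_i=0$, where the middle step uses $\vt_i^\intercal\bJ_{ik}\vt_k=\vt_k^\intercal\bJ_{ik}^\intercal\vt_i$ together with $\bJ_{ik}^\intercal=-\bJ_{ki}$. Hence $\vt^\intercal\bJ\vt=\sum_i\vt_i^\intercal\bJ_{ii}\vt_i$, which is the desired additivity. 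I expect the only real obstacle to be the bookkeeping in the crux step: correctly tracking which variable each derivative acts on so that differentiating the constraint yields exactly $\bJ_{ki}=-\bJ_{ik}^\intercal$, and confirming that the $C^2$ hypothesis justifies swapping the order of the mixed partials.
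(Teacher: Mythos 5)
Your proposal is correct and takes essentially the same route as the paper: both sides are recast as the quadratic form $\vt^\intercal\bJ(\wt)\,\vt$, and the pairwise zero-sum structure is used to make the off-diagonal block contributions cancel, leaving $\sum_i\vt_i^\intercal\bJ_{ii}\vt_i$. The only difference is presentational --- the paper packages the cancellation via the Helmholtz decomposition $\bJ=\bS+\bA$ together with the assertion that $\bS$ is block diagonal for SM-games, whereas you derive the underlying fact $\bJ_{ki}=-\bJ_{ik}^\intercal$ explicitly from the constraint $g_{ki}+g_{ik}\equiv 0$ and equality of mixed partials, a detail the paper leaves implicit.
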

 Section~\ref{s:fail} works through an example that is \emph{not} legible.

\section{Dynamics of Smooth Markets}
\label{s:dyn}

Finally, we study the dynamics of gradient-based learners in SM-games. Suppose firms use gradient ascent. Firm $i$'s updates are, infinitesimally, in the direction $\vt_i = \bxi_i(\wt)$ so that $\frac{d\wt_i}{dt}=\bxi_i(\wt)$. Since updates are gradients, we can simplify our notation. Define firm $i$'s \textbf{forecast} as $\pfc_i(\wt) := \frac{1}{2}\bxi_i^\intercal\grad_i\pi_i = \frac{1}{2}\|\bxi_i(\wt)\|_2^2$ and its \textbf{sentiment}, \emph{ceteris paribus}, as $\bxi_i^\intercal\grad_i\pfc_i = \frac{d\pfc_i}{dt}(\wt)$. 

We allow firms to choose their learning rates; firms with higher learning rates are more responsive. Define the $\betta$-weighted dynamics $\bxi_{\betta}(\wt) := (\eta_1\bxi_1,\ldots, \eta_n\bxi_n)$ and $\betta$-weighted forecast as
\begin{equation}
	\label{eq:marg_profit}
	\pfc_{\betta}(\wt) := \frac{1}{2} \|\bxi_{\betta}(\wt)\|_2^2
	= \frac{1}{2}\sum_i \eta_i^2\cdot \pfc_i(\wt).
\end{equation}	

In this setting, proposition~\ref{prop:macro} implies that
\begin{prop}[legibility under gradient dynamics]\label{prop:delta_decomposition}
	Fix dynamics $\frac{d\wt}{dt} := \bxi_{\betta}(\wt)$. Sentiment decomposes additively:
	\begin{equation}
		\frac{d\pfc_{\betta}}{dt} 
		= \sum_i \eta_i \cdot \frac{d\pfc_i}{dt}
	\end{equation}
\end{prop}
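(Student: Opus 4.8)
The plan is to read Proposition~\ref{prop:delta_decomposition} as the instantiation of Proposition~\ref{prop:macro} in which each firm's update direction is its own learning-rate-scaled gradient, $\vt_i = \eta_i\bxi_i(\wt)$, so that the aggregate direction is $\vt = \bxi_\betta$. Under this choice the general forecast $\pfc_{\vt_i}(\wt) = \vt_i^\intercal\bxi_i$ reduces (up to the normalizing $\tfrac12$) to the gradient-dynamics forecast $\pfc_i = \tfrac12\|\bxi_i\|_2^2$, and the directional derivative $D_{\vt_i}\pfc_{\vt_i}$ reduces to the ceteris-paribus sentiment $\tfrac{d\pfc_i}{dt} = \bxi_i^\intercal\grad_i\pfc_i$. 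So the statement is a corollary of Proposition~\ref{prop:macro}, and the remaining work is to expand both sides and track the $\eta_i$ weights.

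First I would write the aggregate sentiment as a directional derivative and expand it by the chain rule using the block structure of the Jacobian. Writing $\bJ_{ik} := \partial\bxi_i/\partial\wt_k$ for the $(i,k)$ block of $\bJ(\wt)$ and differentiating in the direction $\bxi_\betta$, with the direction held fixed as in the first-order Taylor picture that defines sentiment, produces a double sum over ordered pairs of firms,
\[
\frac{d\pfc_\betta}{dt} = \sum_{i,k}\eta_i\eta_k\,\bxi_i^\intercal\bJ_{ik}\bxi_k ,
\]
which I would split into the diagonal terms ($i=k$) and the off-diagonal, cross-firm terms ($i\neq k$). The diagonal terms are the own-block sentiments $\eta_i^2\,\bxi_i^\intercal\bJ_{ii}\bxi_i = \eta_i^2\,\tfrac{d\pfc_i}{dt}$, which (modulo the normalization discussed below) are what must land on $\eta_i\,\tfrac{d\pfc_i}{dt}$.

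The substantive step is to kill the off-diagonal terms, and here I would use the defining zero-sum structure of an SM-game. Since $g_{ik}(\wt_i,\wt_k) = -g_{ki}(\wt_k,\wt_i)$ and the couplings are $C^2$ so that mixed partials commute, the off-diagonal Jacobian blocks are antisymmetric, $\bJ_{ik} = -\bJ_{ki}^\intercal$ for $i\neq k$. Pairing the $(i,k)$ and $(k,i)$ terms then gives $\eta_i\eta_k\big(\bxi_i^\intercal\bJ_{ik}\bxi_k + \bxi_k^\intercal\bJ_{ki}\bxi_i\big) = 0$, so every cross-firm contribution cancels and only the diagonal survives. This is exactly the cancellation of externalities encoded in Proposition~\ref{prop:macro}; invoking that proposition directly at $\vt=\bxi_\betta$ lets me skip re-deriving the cancellation by hand.

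I expect the main obstacle to be bookkeeping rather than mathematics: matching the learning-rate powers and the factor-of-$\tfrac12$ normalizations between the quadratic forecasts $\pfc_i,\pfc_\betta$ and the linear forecasts $\pfc_{\vt_i}$ of Proposition~\ref{prop:macro}, so that each surviving diagonal term lands precisely on $\eta_i\,\tfrac{d\pfc_i}{dt}$. A secondary subtlety to state carefully is that \emph{sentiment} is the first-order directional derivative with the update direction held fixed, not the full total derivative of $\pfc_\betta$ along the flow; conflating the two reintroduces cross terms weighted by $\eta_i^2\eta_k$ that no longer cancel when learning rates differ, so it is the fixed-direction reading that makes the clean decomposition hold.
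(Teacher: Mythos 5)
Your proposal is correct and is essentially the paper's own argument: the appendix proof goes through Lemma~\ref{lem:block_diag}, which establishes exactly the identity you derive, $\bxi_{\betta}^\intercal\bJ\bxi_{\betta}=\sum_i\eta_i^2\,\bxi_i^\intercal\bS_{ii}\bxi_i$, by splitting $\bJ=\bS+\bA$ with $\bA$ antisymmetric and $\bS$ block-diagonal (your pairwise cancellation $\bJ_{ik}=-\bJ_{ki}^\intercal$ is that cancellation unrolled), and then performs the same $\eta_i$ bookkeeping via $\frac{d\pfc_i}{dt}=\langle\frac{d\wt_i}{dt},\grad_{\wt_i}\pfc_i\rangle=\eta_i\,\bxi_i^\intercal\grad_{\wt_i}\pfc_i$, matching the main text's presentation of the result as Proposition~\ref{prop:macro} instantiated at $\vt_i=\eta_i\bxi_i$. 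Your closing caveat is not only correct but sharper than the paper: the step $\grad\pfc_{\betta}=\sum_i\eta_i\grad\pfc_i=\bJ^\intercal\bxi_{\betta}$ in Lemma~\ref{lem:block_diag} tacitly replaces the literal definition $\pfc_{\betta}=\frac{1}{2}\|\bxi_{\betta}\|_2^2$ of \eqref{eq:marg_profit} (which has $\eta_i^2$-weights on the $\pfc_i$, and whose total derivative along the flow carries exactly the non-cancelling $\eta_i^2\eta_k$ cross terms you describe) by the $\eta_i$-weighted aggregate $\sum_i\eta_i\pfc_i$, which is the reading under which the clean decomposition holds.
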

Thus, we can read off the aggregate dynamics from the dynamics of forecasts of individual firms. 

\subsection{Example of a failure of legibility}
\label{s:fail}

The pairwise zero-sum structure is crucial to legibility. It is instructive to take a closer look at example~\ref{eg:potential}, where the forecasts are \emph{not} legible.

Suppose $\pi_1(\wt) = w_1w_2 - \frac{\epsilon}{2} w_1^2$ and $\pi_2(\wt) = w_1w_2 - \frac{\epsilon}{2} w_2^2$. Then $\bxi(\wt) = (w_2-\epsilon w_1, w_1-\epsilon w_2)$ and the firms' sentiments are $\frac{d \pfc_1}{dt} = -\epsilon(w_2-\epsilon w_1)^2$ and $\frac{d\pfc_2}{dt} = -\epsilon (w_1-\epsilon w_2)^2$ which are always non-positive. However, the aggregate sentiment is
\begin{equation}
	\frac{d\pfc}{dt}(\wt) 
	= -\epsilon(w_2-\epsilon w_1)^2 - \epsilon (w_1-\epsilon w_2)^2
	+ (1+\epsilon^2)w_1w_2 - \epsilon(w_1+w_2)^2
\end{equation}
which for small $\epsilon$ is dominated by $w_1w_2$, and so can be either positive or negative. 

When $\wt=(1,1)$ we have $\frac{d\pfc}{dt} = 1-\epsilon(6-5\epsilon+2\epsilon^2)\approx 1>0$ and $\frac{d\pfc_1}{dt} + \frac{d\pfc_2}{dt} = -2\epsilon(1-\epsilon)^2<0$. Each firm expects their forecasts to decrease, and yet the opposite happens due to a positive feedback loop  that ultimately causes the dynamics to diverge to infinity.

\subsection{Stability, Convergence and Boundedness}
\label{s:cbs}

We provide three fundamental results on the dynamics of smooth markets. Firstly, we show that stability, from dynamical systems, and local Nash equilibrium, from game theory, coincide in SM-games:

\begin{thm}[stability]\label{thm:nash_stable}
    A fixed point in an SM-game is a local Nash equilibrium iff it is stable. Thus, every local Nash equilibrium is contained in an open set that forms its basin of attraction. 
\end{thm}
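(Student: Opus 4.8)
The plan is to reduce both halves of the ``iff'' and the basin claim to a single structural fact: in an SM-game the \emph{symmetric part} of the Jacobian at a fixed point is block-diagonal, with the $i$-th block equal to player $i$'s own-parameter Hessian $\grad^2_{\wt_i}\pi_i$. Once this is established, stability and local Nash both decouple across players and can be matched via standard second-order conditions.

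First I would compute the blocks of $\bJ(\wt^*)$ from $\bxi_i = \grad_{\wt_i}f_i + \sum_{k\neq i}\grad_{\wt_i}g_{ik}$. The diagonal block is $\bJ_{ii} = \grad^2_{\wt_i}\pi_i$, which is symmetric; for $i\neq j$ only the $g_{ij}$ term depends on $\wt_j$, so $\bJ_{ij} = \grad_{\wt_j}\grad_{\wt_i}g_{ij}$. The zero-sum constraint $g_{ji}(\wt_j,\wt_i) = -g_{ij}(\wt_i,\wt_j)$ together with equality of mixed partials then gives $\bJ_{ji} = -\grad_{\wt_i}\grad_{\wt_j}g_{ij} = -\bJ_{ij}^\intercal$. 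Hence the off-diagonal blocks of $\bS := \tfrac12(\bJ + \bJ^\intercal)$ vanish and $\bS = \mathrm{blockdiag}\big(\grad^2_{\wt_1}\pi_1,\ldots,\grad^2_{\wt_n}\pi_n\big)$. Since $\vt^\intercal\bJ\vt = \vt^\intercal\bS\vt = \sum_i \vt_i^\intercal(\grad^2_{\wt_i}\pi_i)\vt_i$ (the antisymmetric part contributes nothing), stability, i.e. $\vt^\intercal\bJ(\wt^*)\vt<0$ for all $\vt\neq\bZ$, is exactly equivalent to $\grad^2_{\wt_i}\pi_i(\wt^*)\prec\bZ$ for every $i$. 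This computation is the engine of the whole proof.

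Next I would match this to local Nash. At a fixed point $\bxi(\wt^*)=\bZ$ means $\grad_{\wt_i}\pi_i(\wt^*)=\bZ$, i.e. first-order stationarity of each sub-problem $\wt_i\mapsto\pi_i(\wt_i,\wt_{-i}^*)$. By the second-order sufficient condition, $\grad^2_{\wt_i}\pi_i(\wt^*)\prec\bZ$ makes $\wt_i^*$ a strict local maximizer, so the block-diagonal characterization gives stable $\Rightarrow$ local Nash; conversely a local Nash forces each sub-problem to a strict local max, hence the necessary condition $\grad^2_{\wt_i}\pi_i(\wt^*)\preceq\bZ$. The hard part will be the gap between this $\preceq$ and the strict $\prec$ needed for stability, since a strict local maximum can have a degenerate Hessian (e.g. $w\mapsto -w^4$). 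I would close it by reading the paper's strict-inequality definitions as the non-degenerate (``differential Nash'') case, or by a genericity hypothesis on $\bJ(\wt^*)$; this matching of strict second-order conditions is the only delicate point.

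Finally, the basin-of-attraction claim follows by a Lyapunov argument. Because profits are $C^2$, $\bS$ is continuous, so $\bS(\wt^*)\prec\bZ$ persists on some ball $B=B(\wt^*,r)$, on which I take $V(\wt)=\tfrac12\|\wt-\wt^*\|_2^2$. Using $\bxi(\wt)=\bxi(\wt)-\bxi(\wt^*)=\big(\int_0^1\bJ(\wt^*+s(\wt-\wt^*))\,ds\big)(\wt-\wt^*)$ and $\vt^\intercal\bJ\vt=\vt^\intercal\bS\vt$ yields $\dot V = \int_0^1 (\wt-\wt^*)^\intercal\bS\big(\wt^*+s(\wt-\wt^*)\big)(\wt-\wt^*)\,ds < 0$ for $\wt\in B\setminus\{\wt^*\}$. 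Thus any sublevel set of $V$ contained in $B$ is forward-invariant with every trajectory converging to $\wt^*$, exhibiting the open basin of attraction.
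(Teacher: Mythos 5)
Your proof is correct, and its engine for the ``iff'' is the same as the paper's: decompose $\bJ=\bS+\bA$ as in lemma~\ref{lem:helm}, show the pairwise zero-sum structure forces $\bS$ to be block-diagonal with blocks $\bS_{ii}=\grad^2_{\wt_i}\pi_i$, use $\vt^\intercal\bJ\vt=\vt^\intercal\bS\vt$ to identify stability with $\bS(\wt^*)\prec\bZ$, and match negative-definite blocks to per-player strict local maxima. Three places where you go beyond the paper's own proof are worth noting. First, you actually derive block-diagonality from $g_{ij}+g_{ji}\equiv 0$ via equality of mixed partials, $\bJ_{ji}=-\bJ_{ij}^\intercal$, whereas the paper merely asserts that zero-sum interactions are ``necessarily confined to the antisymmetric component.'' Second, the gap you flag is real, and it is present in the paper as well: the paper's local Nash condition only delivers $\grad^2_{\wt_i}\pi_i(\wt^*)\preceq\bZ$, not $\prec\bZ$, and its closing claim that negative definiteness holds iff profit is strictly concave at $\wt^*$ fails in the degenerate direction (e.g. $\pi_i=-w_i^4$); so the theorem's ``iff'' implicitly assumes non-degenerate Hessians, and your differential-Nash reading is the right repair rather than a defect of your argument. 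Third, the paper's proof never addresses the basin-of-attraction clause at all; your Lyapunov argument with $V(\wt)=\tfrac{1}{2}\|\wt-\wt^*\|_2^2$ and the integral form $\bxi(\wt)=\bigl(\int_0^1\bJ(\wt^*+s(\wt-\wt^*))\,ds\bigr)\,(\wt-\wt^*)$ supplies it cleanly, and it differs from how the paper handles the related convergence statement in theorem~\ref{t:robust}, which instead uses the aggregate forecast $\pfc_{\betta}$ as the Lyapunov-like function. The two Lyapunov functions buy different things: yours avoids the forecast machinery entirely, while the paper's choice $\pfc_{\betta}=\tfrac{1}{2}\|\bxi_{\betta}\|_2^2$ survives per-player reweighting of the dynamics (since $\bxi_{\betta}^\intercal\bA^\intercal\bxi_{\betta}=0$ for any vector), which is what gives robustness across all learning rates $\betta\succ\bZ$; your squared-distance function would not extend to that weighted setting because the symmetric part of $\diag(\betta)\,\bJ$ is no longer block-diagonal.
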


Secondly, we consider convergence. Lyapunov functions are tools for studying convergence. Given dynamical system $\frac{d\wt}{dt}=\bxi(\wt)$ with fixed point $\wt^*$, recall that $V(\wt)$ is a \textbf{Lyapunov function} if: 
	\textbf{(i)} $V(\wt^*)=0$; 
	\textbf{(ii)} $V(\wt)>0$ for all $\wt\neq \wt^*$; and
	\textbf{(iii)} $\frac{dV}{dt}(\wt) <0$ for all $\wt\neq\wt^*$. 
If a dynamical system has a Lyapunov function then the dynamics converge to the fixed point. Aggregate forecasts share properties \textbf{(i)} and \textbf{(ii)} with Lyapunov functions.
\begin{enumerate}
	\item[\textbf{(i)}]
		\textbf{Shared global minima:}
		$\pfc_{\betta}(\wt) = 0$ iff $\pfc_{\betta'}(\wt) = 0$ for all $\betta,\betta'\succ\bZ$, which occurs iff $\wt$ is a stationary point,  $\bxi_i(\wt)=\bZ$ for all $i$.

	\item[\textbf{(ii)}]
		\textbf{Positivity:}
		$\pfc_{\betta}(\wt) > 0$ for all points that are \emph{not} fixed points, for all $\betta\succ\bZ$.
\end{enumerate}
We can therefore use forecasts to study convergence and divergence \emph{across all learning rates}:

\begin{thm}\label{t:robust}
    In continuous time, for all positive learning rates $\betta\succ\bZ$,
    \begin{enumerate}
	    \item \textbf{Convergence:} 
	    If $\wt^*$ is a stable fixed point ($\bS\prec\bZ$), then there is an open neighborhood $U\ni\wt^*$ where $\frac{d\pfc_{\betta}}{dt}(\wt)<0$ for all $\wt\in U\setminus\{\wt^*\}$, so the dynamics converge to $\wt^*$ from anywhere in $U$. 
	    \item \textbf{Divergence:} 
	    If $\wt^*$ is an unstable fixed point ($\bS\succ\bZ$), there is an open neighborhood $U\ni\wt^*$ such that $\frac{d\pfc_{\betta}}{dt}(\wt)>0$ for all $\wt\in U\setminus\{\wt^*\}$, so the dynamics within $U$ are repelled by $\wt^*$.
    \end{enumerate}
\end{thm}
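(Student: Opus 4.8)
The plan is to use the aggregate forecast $\pfc_{\betta}$ as a Lyapunov function for the convergence claim and as a Chetaev (anti-Lyapunov) function for the divergence claim. The two properties listed immediately before the theorem already supply $\pfc_{\betta}(\wt^*)=0$ and $\pfc_{\betta}(\wt)>0$ for $\wt\neq\wt^*$, so the entire content is to pin down the sign of $\tfrac{d\pfc_{\betta}}{dt}$ on a punctured neighborhood of $\wt^*$, and crucially to do so \emph{uniformly} over all $\betta\succ\bZ$.

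The engine is legibility. By Proposition~\ref{prop:delta_decomposition} the aggregate sentiment decomposes as $\tfrac{d\pfc_{\betta}}{dt}=\sum_i \eta_i\,\tfrac{d\pfc_i}{dt}$, and each per-firm sentiment is the quadratic form $\tfrac{d\pfc_i}{dt}=\bxi_i^\intercal\bJ_{ii}\bxi_i$ in the diagonal Jacobian block $\bJ_{ii}:=\grad_{\wt_i}\bxi_i$; hence $\tfrac{d\pfc_{\betta}}{dt}=\sum_i c_i(\betta)\,\bxi_i^\intercal\bJ_{ii}(\wt)\,\bxi_i$ with strictly positive coefficients $c_i(\betta)$ (their precise values are immaterial). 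The load-bearing structural fact, which I would establish by differentiating the pairwise zero-sum identity $g_{ij}+g_{ji}\equiv 0$ twice, is that the off-diagonal blocks satisfy $\bJ_{ij}=-\bJ_{ji}^\intercal$ for $i\neq j$. Consequently the symmetric part $\bS=\tfrac12(\bJ+\bJ^\intercal)$ has vanishing off-diagonal blocks and is block-diagonal, $\bS=\diag(\bJ_{11},\ldots,\bJ_{nn})$. Therefore $\bS\prec\bZ$ is equivalent to $\bJ_{ii}\prec\bZ$ for every $i$, and $\bS\succ\bZ$ to $\bJ_{ii}\succ\bZ$ for every $i$. This is exactly what absorbs the learning rates harmlessly: $\tfrac{d\pfc_{\betta}}{dt}$ is a sum, with strictly positive coefficients and \emph{no cross terms}, of per-firm quadratic forms that all share one sign, so the $c_i(\betta)$ cannot flip the sign of the total. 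By contrast, in a non-legible game such as Example~\ref{eg:potential} reweighting produces sign-indefinite cross terms, which is the failure dissected in Section~\ref{s:fail}.

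For convergence, evaluate at a stable fixed point: $\bS(\wt^*)\prec\bZ$ gives $\bJ_{ii}(\wt^*)\prec\bZ$ for all $i$. Since profits are $C^2$, the maps $\wt\mapsto\bJ_{ii}(\wt)$ are continuous and negative-definiteness is an open condition, so there is a neighborhood $U\ni\wt^*$ on which every $\bJ_{ii}(\wt)\prec\bZ$. Shrinking $U$ so that $\wt^*$ is its only fixed point -- possible because $\bv^\intercal\bJ(\wt^*)\bv=\bv^\intercal\bS(\wt^*)\bv<0$ forces $\bJ(\wt^*)$ nonsingular, hence $\bxi$ vanishes only at $\wt^*$ nearby -- every $\wt\in U\setminus\{\wt^*\}$ has $\bxi(\wt)\neq\bZ$ and so $\tfrac{d\pfc_{\betta}}{dt}=\sum_i c_i(\betta)\,\bxi_i^\intercal\bJ_{ii}\bxi_i<0$. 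With the two established properties this makes $\pfc_{\betta}$ a strict Lyapunov function; passing to a forward-invariant sublevel set of $\pfc_{\betta}$ inside $U$ and invoking Lyapunov's theorem yields convergence to $\wt^*$ from anywhere in that set. The divergence claim is the mirror image: $\bS(\wt^*)\succ\bZ$ gives $\bJ_{ii}(\wt^*)\succ\bZ$, hence $\tfrac{d\pfc_{\betta}}{dt}>0$ on $U\setminus\{\wt^*\}$; since $\pfc_{\betta}$ is positive there, vanishes only at $\wt^*$, and strictly increases along every nonconstant trajectory, no trajectory started in $U\setminus\{\wt^*\}$ can approach $\wt^*$. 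A Chetaev-type argument, using that $\tfrac{d\pfc_{\betta}}{dt}$ is bounded below by a positive constant on compact annuli, then shows trajectories are actively expelled from $U$.

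The main obstacle -- and the only genuinely delicate step -- is the cross-term cancellation that renders $\bS$ block-diagonal and $\tfrac{d\pfc_{\betta}}{dt}$ a sign-definite, coupling-free weighted sum for every $\betta\succ\bZ$. Everything else (continuity, isolation of the fixed point, and the Lyapunov/Chetaev machinery) is standard. It is precisely the pairwise zero-sum bookkeeping, via $\bJ_{ij}=-\bJ_{ji}^\intercal$, that upgrades ``stable fixed point'' into robustness across all positive learning rates, and it is exactly the structure that Example~\ref{eg:potential} lacks.
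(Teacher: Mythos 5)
Your proof is correct and takes essentially the same route as the paper's: both reduce the sign of $\frac{d\pfc_{\betta}}{dt}$ to the quadratic form $\bxi_{\betta}^\intercal\bS\,\bxi_{\betta}$ in the symmetric part of the Jacobian, use the pairwise zero-sum structure to make $\bS$ block-diagonal (so the antisymmetric cross-couplings drop out and learning rates only rescale sign-definite per-firm terms), and then run a Lyapunov argument near $\wt^*$, with the divergence case by symmetry. If anything, your writeup is more careful than the paper's on the routine points it glosses over --- continuity giving the neighborhood $U$, isolation of the fixed point via nonsingularity of $\bJ(\wt^*)$, and the forward-invariant sublevel-set/Chetaev machinery.
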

The theorem explains why SM-games are robust to relative differences in learning rates -- in contrast to the sensitivity exhibited by the game in example~\ref{eg:lr}. If a fixed point is stable, then for any dynamics $\frac{d\wt}{dt}=\bxi_{\betta}(\wt)$, there is a corresponding aggregate forecast $\pfc_{\betta}(\wt)$ that can be used to show convergence. The aggregate forecasts provide a family of Lyapunov-like functions.

Finally, we consider the setting where firms experience diminishing returns-to-scale for sufficiently large production vectors. The assumption is realistic for firms in a finite economy since revenues must eventually saturate whilst costs continue to increase with production. 

\begin{thm}[boundedness]\label{thm:bounded}
	Suppose all firms have negative sentiment for sufficiently large values of $\|\wt_i\|$. Then the dynamics are bounded for all $\betta\succ\bZ$.
\end{thm}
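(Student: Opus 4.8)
The plan is to exhibit a Lyapunov-type certificate built from the aggregate forecast $\pfc_{\betta}(\wt) = \frac{1}{2}\|\bxi_{\betta}(\wt)\|_2^2$, since this is precisely the quantity whose dynamics legibility lets us control. By construction $\pfc_{\betta}\geq 0$, with equality exactly at the fixed points $\bxi(\wt)=\bZ$ (properties \textbf{(i)}--\textbf{(ii)} preceding Theorem~\ref{t:robust}). The first step is to differentiate $\pfc_{\betta}$ along the flow $\frac{d\wt}{dt}=\bxi_{\betta}(\wt)$. Here Proposition~\ref{prop:delta_decomposition} does the heavy lifting: the pairwise zero-sum structure forces the cross-firm contributions to cancel, so that
\begin{equation}
  \frac{d\pfc_{\betta}}{dt} = \sum_i \eta_i\,\frac{d\pfc_i}{dt},
\end{equation}
i.e. the aggregate sentiment is literally the (learning-rate weighted) sum of the individual \emph{ceteris paribus} sentiments. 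Without legibility this derivative would contain coupling between firms that could pump the forecast upward, exactly as in the non-legible example of Section~\ref{s:fail}; the SM-structure is what rules that out.

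Second, I would feed in the hypothesis. By assumption there are radii $R_i$ so that the $i$-th sentiment $\frac{d\pfc_i}{dt}$ is strictly negative whenever $\|\wt_i\|>R_i$. Since each $\eta_i>0$, on the region where every firm is outside its ball the right-hand side above is strictly negative, so $\pfc_{\betta}$ is strictly decreasing there. Writing $K := \prod_i \{\|\wt_i\|\leq R_i\}$ for the resulting compact box, the goal is to promote this into the statement that the trajectory cannot escape to infinity: the intended mechanism is that a sufficiently large sublevel set of $\pfc_{\betta}$, together with $K$, forms a trapping region, so that the trajectory is confined to the union of $K$ and a sublevel set determined by its initial forecast.

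The hard part is the passage from ``the aggregate forecast decreases far from a compact set'' to ``the trajectory stays bounded'', and it has two facets. First, $\pfc_{\betta}$ measures gradient magnitude rather than position, so it is not automatically radially unbounded; I would either invoke the diminishing-returns hypothesis in its coercive form (revenues saturate while the self-terms $f_i$ eventually dominate and drive $\|\bxi_i\|\to\infty$, making $\pfc_{\betta}$ proper) or, absent coercivity, argue boundedness block-by-block via a trapping-region / LaSalle argument on each $\wt_i$. Second, and more delicate, is the mixed regime in which some firms sit inside their balls while others grow large: the sentiments of the small firms depend on the large coordinates through the coupling $g_{ij}$, so one must verify that these bounded-firm contributions cannot overwhelm the negative contributions of the escaping firms. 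I expect this mixed-regime bookkeeping to be the main obstacle, and I would address it by enlarging $K$ and using continuity and compactness to bound the finitely many ``interior'' sentiment terms uniformly, leaning once more on the zero-sum identity of Proposition~\ref{prop:delta_decomposition} to keep the aggregate accounting honest.
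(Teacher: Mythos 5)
Your proposal takes essentially the same route as the paper's proof: there too, $\pfc_{\betta}$ is the Lyapunov-type certificate, legibility (Proposition~\ref{prop:delta_decomposition}) gives $\frac{d\pfc_{\betta}}{dt}=\sum_i\eta_i\frac{d\pfc_i}{dt}<0$ on the region $U(d)=\{\wt:\|\wt_i\|_2>d\text{ for all }i\}$, and boundedness is deduced by showing that a sublevel set $\{\pfc_{\betta}<g\}$ whose boundary $\pfc_{\betta}^{-1}(g)$ lies inside $U(d)$ is forward-invariant, i.e.\ a trapping region. The one thing worth recording is that the two ``hard parts'' you flag are precisely the points the paper's proof passes over rather than resolves: your mixed-regime worry (some $\|\wt_i\|$ small while others are large) is dispatched there by the bare supposition ``let $g>0$ be sufficiently large such that $\pfc_{\betta}^{-1}(g)\subset U(d)$'', whose existence is exactly what a mixed-regime analysis would need to establish (it amounts to $\pfc_{\betta}$ being bounded on the complement of $U(d)$, which does not follow from the stated hypotheses); and the passage from ``$\pfc_{\betta}(\wt^{(t)})$ stays below $g$'' to ``the trajectory $\wt^{(t)}$ is bounded'' --- your properness/coercivity concern, since sublevel sets of a gradient-norm function need not be bounded in $\wt$ --- is covered only by the words ``the result follows''. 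So your plan matches the paper's argument in substance, and the obstacles you name are not defects of your proposal relative to the paper: they are genuine gaps shared by both, which your write-up has the virtue of making explicit.
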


The theorem implies that the kind of positive feedback loops that caused example~\ref{eg:potential} to diverge to infinity, cannot occur in SM-games.

\subsection{Legibility and the landscape}

\begin{figure*}
    \center
    \includegraphics[width=.97\textwidth]{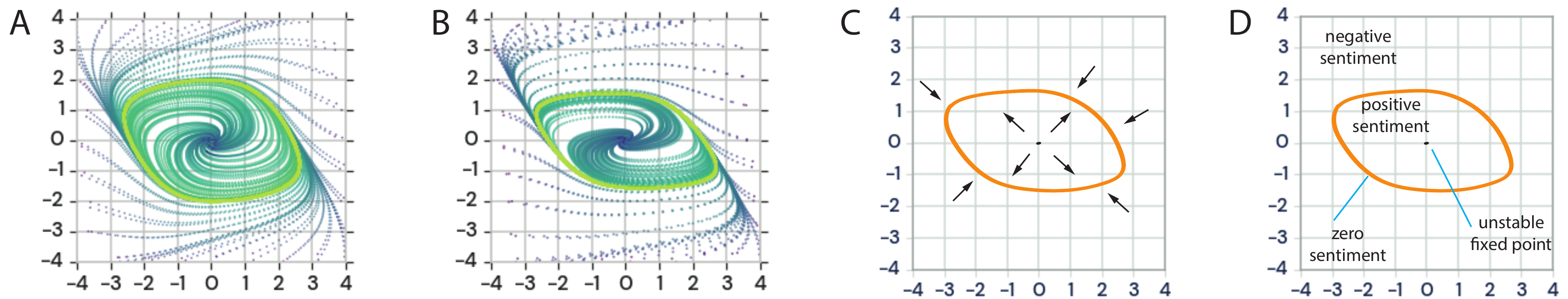}
    \vspace{-3mm}
    \caption{\textbf{Legibile dynamics.} 
    \emph{Panels AB:} Dynamics in an SM-game with both positive and negative sentiment, under different learning rates.
    \emph{Panels CD:} Cartoon maps of the dynamics.
    }
    \label{f:swirls}
\end{figure*}

One of our themes is that legibility allows to read off the dynamics of games. We make the claim visually explicit in this section. Let us start with a concrete game. 

\begin{eg}\label{eg:swirls}
	Consider the SM-game with profits
	\begin{equation}
		\pi_1(\wt)  = -\frac{1}{6}|w_1|^3 + \frac{1}{2}w_1^2 - w_1w_2
		\quad\quad\text{and}\quad
		\pi_2(\wt) = -\frac{1}{6}|w_2|^3 + \frac{1}{2}w_2^2 + w_1w_2.
	\end{equation}
\end{eg}
Figure~\ref{f:swirls}AB plots the dynamics of the SM-game in example~\ref{eg:swirls}, under two different learning rates for player 1. There is an unstable fixed point at the origin and an ovoidal cycle. Dynamics converge to the cycle from both inside and outside the ovoid. Changing player 1's learning rate, panel B, squashes the ovoid. Panels CD provide a cartoon map of the dynamics. There are two regions, the interior and exterior of the ovoid and the boundary formed by the ovoid itself. 

In general, the phase space of any SM-game is carved into regions where sentiment $\frac{d\pfc_{\betta}}{dt}(\wt)$ is positive and negative, with boundaries where sentiment is zero. The dynamics can be visualized as operating on a landscape where height at each point $\wt$ corresponds to the value of the aggregate forecast $\pfc_{\betta}(\wt)$. The dynamics does not always ascend or always descend the landscape. Rather, sentiment determines whether the dynamics ascends, descends, or remains on a level-set. Since sentiment is additive, $\frac{d\pfc_{\betta}}{dt}(\wt) = \sum_i \eta_i\cdot \frac{d\pfc}{dt}(\wt)$, the decision to ascend or descend comes down to a weighted sum of the sentiments of the firms.\footnote{Note: the dynamics do not necessarily follow the gradient of $\pm\pfc_{\betta}$. Rather, they move in directions with positive or negative inner product with $\grad\pfc_{\betta}$ according to sentiment.} Changing learning rates changes the emphasis given to different firms' opinions, and thus changes the shapes of the boundaries between regions in a relatively straightforward manner.

SM-games can thus express richer dynamics than potential games (cycles will not occur when  performing gradient ascent on a fixed objective), which still admit a relatively simple visual description in terms of a landscape and decisions about which direction to go (upwards or downwards). Computing the landscape for general SM-games, as for neural nets, is intractable.

\section{Discussion}
\label{s:disc}

Machine learning has got a lot of mileage out of treating differentiable modules like plug-and-play lego blocks. This works when the modules optimize a single loss and the gradients chain together seamlessly. Unfortunately, agents with differing objectives are far from plug-and-play. Interacting agents form games, and games are intractable in general. Worse, positive feedback loops can cause individually well-behaved agents to collectively spiral out of control.

It is therefore necessary to find organizing principles -- constraints -- on how agents interact that ensure their collective behavior is amenable to analysis and control. The pairwise zero-sum condition that underpins SM-games is one such organizing principle, which happens to admit an economic interpretation. Our main result is that SM-games are legible: changes in aggregate forecasts are the sum of how individual firms expect their forecasts to change. It follows that we can translate properties of the individual firms into guarantees on collective convergence, stability and boundedness in SM-games, see theorems~\ref{thm:nash_stable}-\ref{thm:bounded}. 

Legibility is a local-to-global principle, whereby we can draw qualitative conclusions about the behavior of collectives based on the nature of their individual members. Identifying and exploiting games that embed local-to-global principles will become increasingly important as artificial agents become more common.

\setcounter{section}{0}
\renewcommand{\thesection}{\Alph{section}}
\vspace{5mm}
\noindent
{\textsf{\textbf{\Large{APPENDIX}}}}

\section{Mechanics of Smooth Markets}
\label{s:msm}

This section provides a physics-inspired perspective on smooth markets. Consider a dynamical system with $n$ particles moving according to the differential equations:
\begin{align}
    \frac{d\wt_1}{d t} = &\, \bxi_1(\wt_1,\ldots,\wt_n) \\
    & \vdots \\
    \frac{d\wt_n}{dt} = &\, \bxi_n(\wt_1,\ldots,\wt_n).
\end{align}
The kinetic energy of a particle is mass times velocity squared, $mv^2$, or in our case
\begin{equation}
    \text{energy of }i^\text{th} \text{ particle} = \eta_i^2\cdot\pfc_i(\wt) = \eta_i^2\cdot \|\bxi_i\|^2
\end{equation}
where we interpret the learning rate squared $\eta_i^2$ of particle $i$ as its mass and $\bxi_i$ as its velocity. The total energy of the system is the sum over the kinetic energies of the particles:
\begin{equation}
    \text{total energy} = \pfc_{\betta}(\wt) = \sum_i \eta_i^2\cdot\|\bxi_i\|^2.
\end{equation}
For example, in a Hamiltonian game we have that energy is conserved:
\begin{equation}
    \text{total energy} = \|\bxi\|^2=\text{constant} 
    \quad\text{or}\quad
    \frac{d\pfc}{dt}(\wt) = 0
\end{equation}
since $\bxi^\intercal \cdot \grad\|\bxi\|^2= \bxi^\intercal\cdot \bA^\intercal \bxi=0$, see \citet{dgm:18, letcher:19a} for details. 

Energy is measured in joules ($kg\cdot m\cdot s^{-2}$). The rate of change of energy with respect to time is \emph{power}, measured in joules per second or watts ($kg\cdot m\cdot s^{-3}$). Conservation of energy means that a (closed) Hamiltonian system, in aggregate, generates no power. The existence of an invariant function makes Hamiltonian systems easy to reason about in many ways. 

Smooth markets are more general than Hamiltonian games in that total energy is not necessarily conserved. Nevertheless, they are much more constrained than general dynamical systems. Legibility, proposition~\ref{prop:delta_decomposition}, says that the total power (total rate of energy generation) in smooth markets is the sum of the power (rate of energy generation) of the individual particles:
\begin{equation}
    \frac{d\pfc}{dt}(\wt) = \sum_i \frac{d\pfc_i}{dt}(\wt).
\end{equation}

\textbf{Example where legibility fails.}
Once again, it is instructive to look at a concrete example where legibility fails. Recall the potential game in example~\ref{eg:potential} with profits
\begin{equation}
    \pi_1(\wt) = w_1w_2-\frac{\epsilon}{2} w_1^2
    \quad\text{and} \quad
    \pi_2(\wt) = w_1w_2 - \frac{\epsilon}{2}w_2^2.
\end{equation}
and sentiments
\begin{equation}
    \frac{d\pfc_1}{dt}(\wt) = -\epsilon(w_2-\epsilon w_1)^2
    \quad\text{and}\quad
    \frac{d\pfc_2}{dt}(\wt) = -\epsilon(w_1-\epsilon w_2)^2.
\end{equation}
Physically, the negative sentiments $\frac{d\pfc_1}{dt}<0$ and $\frac{d\pfc_2}{dt}<0$ mean that that each ``particle'' in the system, considered in isolation, is always dissipating energy. Nevertheless as shown in section~\ref{s:fail} the system as a whole has
\begin{equation}
    \frac{d\pfc}{dt}(\wt) = -\epsilon(w_2-\epsilon w_1)^2 - \epsilon (w_1-\epsilon w_2)^2\\
	 + (1+\epsilon^2)w_1w_2 - \epsilon(w_1+w_2)^2
\end{equation}
which is positive for some values of $\wt$. Thus, the system as a whole can generate energy  through interaction effects between the (dissipative) particles.

\section{Proofs}
\label{s:proofs}

\paragraph{Proof of lemma~\ref{lem:all_ds}.}

\begin{lem*}
    Every continuous dynamical system on $\bR^d$, for any $d$, arises as simultaneous gradient ascent on the profit functions of a smooth game.
\end{lem*}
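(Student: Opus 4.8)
The plan is to realize the prescribed vector field by giving \emph{each coordinate its own player} and defining that player's profit to be an antiderivative of the corresponding component of the field. The reason this is even possible — and the conceptual heart of the statement — is that simultaneous gradient ascent differentiates $\pi_i$ only with respect to player $i$'s own variable $\wt_i$. There is therefore no integrability constraint to satisfy. This is in sharp contrast to writing a field as a \emph{single} gradient $\bxi=\grad\phi$, which would force the Jacobian to be symmetric (a curl-free condition); splitting the field across separate profit functions removes exactly that obstruction.

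Concretely, write the dynamical system as $\frac{d\wt}{dt}=\bxi(\wt)$ with $\bxi=(\xi_1,\ldots,\xi_d):\bR^d\to\bR^d$. I would build a smooth game in the sense of Definition~\ref{def:diff_game} with $n=d$ players, player $i$ controlling the scalar coordinate $\wt_i=w_i\in\bR$ (so every $d_i=1$ and $\sum_i d_i=d$). Define
\[
    \pi_i(\wt) \;:=\; \int_0^{w_i}\xi_i(w_1,\ldots,w_{i-1},s,w_{i+1},\ldots,w_d)\,ds .
\]
Then by the fundamental theorem of calculus
\[
    \bxi_i(\wt) \;=\; \grad_{\wt_i}\pi_i(\wt) \;=\; \frac{\partial\pi_i}{\partial w_i}(\wt) \;=\; \xi_i(\wt),
\]
so the simultaneous-gradient field assembled in \eqref{eq:sgrad} coincides componentwise with the prescribed $\bxi$. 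Hence the gradient-ascent dynamics of this game is precisely the given continuous dynamical system.

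The only point requiring care is regularity, and I expect this to be the sole (mild) obstacle. To sit strictly inside Definition~\ref{def:diff_game} one wants each $\pi_i$ to be twice continuously differentiable; this holds as soon as $\bxi$ is $C^1$, since then $\partial\pi_i/\partial w_i=\xi_i$ is $C^1$ and the mixed partials $\partial\pi_i/\partial w_j=\int_0^{w_i}\partial_{w_j}\xi_i\,ds$ (for $j\neq i$) are continuous in $\wt$. For a merely continuous field the same construction yields $C^1$ profits, which already suffices to define the dynamics $\frac{d\wt}{dt}=\bxi(\wt)$; one can simply note that the differentiability hypotheses are inherited from $\bxi$. The substantive content of the lemma is thus not a calculation but the structural observation that endowing each player with a private profit function lets any field be written as a per-coordinate gradient, bypassing the symmetry-of-second-derivatives condition that blocks a single-potential representation.
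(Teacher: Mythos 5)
Your proposal is correct and uses exactly the paper's construction: one player per coordinate, with profit $\pi_i$ defined as the antiderivative $\int_0^{w_i}\xi_i\,ds$ of the $i$-th component of the field, so that the fundamental theorem of calculus recovers $\bxi$ as the simultaneous gradient. Your additional remarks on regularity and on why no integrability (curl-free) condition is needed are sound refinements of the same argument, not a different route.
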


\begin{proof}
    Specifically, we mean that every dynamical system of the form $\frac{d\wt}{dt} = \bxi(\wt)$ arises as simultaneous gradient ascent on the profits of a smooth game.

    Given continuous vector field $\bxi$ on $\bR^d$, we need to construct a smooth game with dynamics given by $\bxi$. To that end, consider a $d$-player game where player $i$ controls coordinate $w_i$. Set the profit of player $i$ to
    \begin{equation}
        \pi_i(\wt) := \int_{x=0}^{x=w_i} \xi_i(w_1,\ldots, w_{i-1}, x, w_{i+1},\ldots, w_d) d x
    \end{equation}
    and observe that $\frac{\dd \pi_i}{\dd w_i}(\wt) = \xi_i(\wt)$ by the fundamental theorem of calculus.
\end{proof}

\paragraph{Proof of proposition~\ref{prop:macro}.}

Before proving proposition~\ref{prop:macro}, we first prove a lemma.

\begin{lem}[generalized Helmholtz decomposition]\label{lem:helm}
	The Jacobian decomposes into $\bJ(\wt) = \bS(\wt) + \bA(\wt)$ where $\bS(\wt)$ and $\bA(\wt)$ are symmetric and antisymmetric, respectively, for all $\wt\in\bR^d$.
\end{lem}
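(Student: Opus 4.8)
The plan is to invoke the elementary linear-algebra fact that any real square matrix splits uniquely into a symmetric and an antisymmetric summand, and to apply it pointwise to the Jacobian. Concretely, for each fixed $\wt\in\bR^d$ I would set
\begin{equation}
    \bS(\wt) := \tfrac{1}{2}\big(\bJ(\wt) + \bJ(\wt)^\intercal\big)
    \quad\text{and}\quad
    \bA(\wt) := \tfrac{1}{2}\big(\bJ(\wt) - \bJ(\wt)^\intercal\big),
\end{equation}
so that $\bS(\wt) + \bA(\wt) = \bJ(\wt)$ holds identically by construction. The verification is then immediate: using $(\bJ^\intercal)^\intercal = \bJ$ one checks $\bS(\wt)^\intercal = \bS(\wt)$ and $\bA(\wt)^\intercal = -\bA(\wt)$, which are exactly the symmetry and antisymmetry claims.

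Two minor points are worth recording to make the statement fully precise. First, the decomposition is valid at every point $\wt$ rather than merely at a single one, but this is immediate because the defining formulas are entrywise and make no reference to a particular $\wt$; moreover, since the profits are twice continuously differentiable (Definition~\ref{def:diff_game}), the entries of $\bJ(\wt)$ are continuous in $\wt$, and hence so are the entries of $\bS(\wt)$ and $\bA(\wt)$, which are fixed linear combinations of them. Second, the splitting is unique: if $\bS' + \bA' = \bS + \bA$ with $\bS'$ symmetric and $\bA'$ antisymmetric, then $\bS' - \bS = \bA - \bA'$, where the left-hand side is symmetric and the right-hand side is antisymmetric; transposing gives $\bS' - \bS = -(\bS' - \bS)$, forcing $\bS' - \bS = \bZ$ and likewise $\bA' = \bA$.

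There is essentially no genuine obstacle here — the content is purely algebraic and the lemma is a direct transcription of the standard symmetric/antisymmetric decomposition into the game-theoretic setting, mirroring the ``Helmholtz'' terminology of \citet{dgm:18, letcher:19a}. The only step that deserves a sentence rather than a symbol is the uniqueness argument above, since it is the one place where the symmetric and antisymmetric structures genuinely interact; the rest is a one-line definition followed by transposition.
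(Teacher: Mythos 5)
Your proof is correct and is exactly the argument the paper has in mind: the paper's own ``proof'' consists of the single line ``Follows immediately'' plus a citation to \citet{letcher:19a}, which refers to precisely the pointwise split $\bS = \tfrac{1}{2}(\bJ + \bJ^\intercal)$, $\bA = \tfrac{1}{2}(\bJ - \bJ^\intercal)$ that you write out. Your version simply makes explicit what the paper leaves implicit, including the (standard) uniqueness argument, so there is nothing to correct.
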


\begin{proof}
    Follows immediately. See \citet{letcher:19a} for details and explanation.
\end{proof}

\begin{prop*}
	Sentiment is additive: $D_{\vt}\pfc_{\vt}(\wt)
	    = \sum_i D_{\vt_i}\pfc_{\vt_i}(\wt)$.
\end{prop*}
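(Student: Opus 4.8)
The plan is to reduce both sides of the claimed identity to quadratic forms in the game Jacobian $\bJ$ and then exploit the pairwise zero-sum structure to kill the off-diagonal contributions. First I would rewrite the aggregate forecast as a single inner product: since $\pfc_\vt(\wt) = \sum_i \vt_i^\intercal\bxi_i(\wt) = \vt^\intercal\bxi(\wt)$, its gradient is $\grad\pfc_\vt = \bJ^\intercal\vt$, so the aggregate sentiment is the quadratic form $D_\vt\pfc_\vt(\wt) = \vt^\intercal\grad\pfc_\vt(\wt) = \vt^\intercal\bJ(\wt)\vt$, the last equality because a scalar equals its transpose. Doing the same computation block-by-block, firm $i$'s sentiment is $D_{\vt_i}\pfc_{\vt_i}(\wt) = \vt_i^\intercal\bJ_{ii}(\wt)\vt_i$, where $\bJ_{ii} = \grad_{\wt_i}\bxi_i$ is the $(i,i)$ diagonal block of $\bJ$. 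Writing the full form in block notation, $\vt^\intercal\bJ\vt = \sum_{i,j}\vt_i^\intercal\bJ_{ij}\vt_j$, the proposition is therefore equivalent to the vanishing of the off-diagonal part, $\sum_{i\neq j}\vt_i^\intercal\bJ_{ij}\vt_j = 0$.

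The heart of the argument, and the step where the SM-game hypothesis enters, is computing the off-diagonal blocks. For $i\neq j$, only the coupling term $g_{ij}(\wt_i,\wt_j)$ in $\pi_i = f_i(\wt_i) + \sum_{k\neq i}g_{ik}$ depends on $\wt_j$, so $\bJ_{ij} = \grad_{\wt_j}\bxi_i = \grad_{\wt_j}\grad_{\wt_i}g_{ij}$. The zero-sum condition $g_{ij}(\wt_i,\wt_j) + g_{ji}(\wt_j,\wt_i)\equiv 0$ gives $\bJ_{ji} = \grad_{\wt_i}\grad_{\wt_j}g_{ji} = -\grad_{\wt_i}\grad_{\wt_j}g_{ij}$, and since profits are twice continuously differentiable the mixed partials commute, yielding $\bJ_{ji} = -\bJ_{ij}^\intercal$. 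Consequently each complementary pair of off-diagonal blocks cancels in the quadratic form: $\vt_i^\intercal\bJ_{ij}\vt_j + \vt_j^\intercal\bJ_{ji}\vt_i = \vt_i^\intercal\bJ_{ij}\vt_j - (\vt_i^\intercal\bJ_{ij}\vt_j)^\intercal = 0$, so $\sum_{i\neq j}\vt_i^\intercal\bJ_{ij}\vt_j = 0$ and the identity follows.

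Alternatively, and this is presumably why the generalized Helmholtz decomposition (Lemma~\ref{lem:helm}) is proved first, one can package the same fact as a statement about $\bS$. The relation $\bJ_{ji} = -\bJ_{ij}^\intercal$ says exactly that the symmetric part $\bS = \frac12(\bJ + \bJ^\intercal)$ has vanishing off-diagonal blocks, i.e. $\bS$ is block-diagonal with $\bS_{ii}$ equal to the symmetric part of $\bJ_{ii}$. Then $D_\vt\pfc_\vt = \vt^\intercal\bJ\vt = \vt^\intercal\bS\vt = \sum_i\vt_i^\intercal\bS_{ii}\vt_i = \sum_i\vt_i^\intercal\bJ_{ii}\vt_i = \sum_i D_{\vt_i}\pfc_{\vt_i}$, where the antisymmetric part drops out of every quadratic form (both globally, $\vt^\intercal\bA\vt=0$, and in each diagonal block, $\vt_i^\intercal\bA_{ii}\vt_i=0$).

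The main obstacle I anticipate is not conceptual but bookkeeping: correctly identifying which block of $\bJ$ corresponds to which matrix of mixed second derivatives, and tracking the transpose when passing between $\bJ_{ij}$ and $\bJ_{ji}$. The one genuine hypothesis used beyond pairwise zero-sum is the $C^2$ smoothness of the profits, which licenses Clairaut's theorem so that $\grad_{\wt_j}\grad_{\wt_i}g_{ij} = (\grad_{\wt_i}\grad_{\wt_j}g_{ij})^\intercal$; without symmetry of mixed partials the pairwise cancellation would fail.
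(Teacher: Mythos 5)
Your proof is correct, and its core is the same as the paper's: both reduce the aggregate sentiment to the quadratic form $\vt^\intercal\bJ(\wt)\vt$ and show that the cross-player contributions vanish because the pairwise zero-sum structure forces them into the antisymmetric part of $\bJ$. The difference is one of completeness rather than substance. The paper's proof invokes the Helmholtz decomposition $\bJ=\bS+\bA$ and then simply \emph{asserts} that in an SM-game $\bS$ is block-diagonal (the justification appears only as a verbal remark in the proof of the stability theorem); your first argument supplies the missing computation, deriving $\bJ_{ji}=-\bJ_{ij}^\intercal$ for $i\neq j$ from $g_{ij}+g_{ji}\equiv 0$ together with Clairaut's theorem, which is precisely the statement $\bS_{ij}=\bZ$ for $i\neq j$. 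Your ``alternative'' packaging via the symmetric part is then literally the paper's proof. You are also right to single out $C^2$ smoothness as the hypothesis licensing the symmetry of mixed partials; that is exactly where the paper's appeal to the Helmholtz lemma (cited as ``follows immediately'') hides the work.
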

\begin{proof}
    For any collection of updates $(\vt_i)_{i=1}^n$, we need to show that
	\begin{equation}
		\label{eq:additivity}
		\vt^\intercal\cdot\grad\pfc_\vt(\wt) 
		=\sum_i \vt_i^\intercal\cdot\grad\pfc_{\vt_i}(\wt).
 	\end{equation}	
	Direct computation obtains $\vt^\intercal\grad\pfc_\vt(\wt) = \vt^\intercal\bJ\vt = \vt^\intercal \bS\vt + \vt^\intercal\bA\vt = \sum_i \vt^\intercal\bS_{ii}\vt=\sum_i \vt_i^\intercal\cdot\grad\pfc_{\vt_i}(\wt)$ because $\bA$ is antisymmetric and $\bS$ is block-diagonal. 
\end{proof}

\paragraph{Proof of proposition~\ref{prop:delta_decomposition}.}
First we prove a lemma.
\begin{lem}\label{lem:block_diag}
	$\bxi_{\betta}^\intercal(\wt)\cdot \grad \pfc_{\betta}(\wt)
		= \sum_{i=1}^n \eta_i^2 \cdot \bxi_i^\intercal \grad\pfc_i(\wt)$.
\end{lem}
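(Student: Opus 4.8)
The plan is to recognise the left-hand side as a quadratic form in the game Jacobian $\bJ$ and then exploit the same structural fact that powers Proposition~\ref{prop:macro}: in an SM-game the symmetric part of $\bJ$ is block-diagonal. The slick route is to observe that the proof of Proposition~\ref{prop:macro} establishes the pointwise algebraic identity $\bv^\intercal\bJ(\wt)\,\bv = \sum_i \bv_i^\intercal \bS_{ii}(\wt)\,\bv_i$, valid for every vector $\bv=(\bv_1,\ldots,\bv_n)$. I would specialise this to the weighted gradient direction $\bv = \bxi_{\betta}(\wt)$, whose $i$-th block is $\eta_i\bxi_i(\wt)$; the $i$-th summand then becomes $\eta_i^2\,\bxi_i^\intercal \bS_{ii}\,\bxi_i$, and since the diagonal block is symmetric we have $\grad_i\pfc_i = \bJ_{ii}^\intercal\bxi_i = \bS_{ii}\bxi_i$, giving exactly $\sum_i \eta_i^2\,\bxi_i^\intercal\grad_i\pfc_i$.

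To keep the argument self-contained I would re-derive the identity rather than merely quote it, which also exposes the mechanism. Differentiating $\pfc_{\betta}=\tfrac12\|\bxi_{\betta}\|_2^2$ and contracting with the flow direction writes the sentiment as the Jacobian quadratic form $\bxi_{\betta}^\intercal\bJ\,\bxi_{\betta}$; this is the step where the learning-rate weights must be tracked carefully, so that each pair of firms enters with the symmetric weight $\eta_i\eta_j$. Applying the generalised Helmholtz decomposition of Lemma~\ref{lem:helm}, $\bJ=\bS+\bA$, the antisymmetric part drops out because $\bu^\intercal\bA\,\bu=0$ for every vector $\bu$, so only $\bxi_{\betta}^\intercal\bS\,\bxi_{\betta}$ survives.

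The decisive input is that $\bS$ is block-diagonal in an SM-game. This comes from the pairwise zero-sum constraint $g_{ij}+g_{ji}\equiv 0$: differentiating twice gives $\bJ_{ij}=-\bJ_{ji}^\intercal$ for $i\neq j$, so the off-diagonal blocks of $\bJ$ are purely antisymmetric and vanish from $\bS$, leaving $\bS_{ii}=\bJ_{ii}$. Consequently $\bxi_{\betta}^\intercal\bS\,\bxi_{\betta}$ collapses to the block-diagonal sum $\sum_i \eta_i^2\,\bxi_i^\intercal\bS_{ii}\,\bxi_i$, completing the argument. I expect the main obstacle to be exactly the weight bookkeeping in the first reduction: it is the symmetry of the coefficient $\eta_i\eta_j$ on blocks $(i,j)$ and $(j,i)$, combined with $\bJ_{ij}=-\bJ_{ji}^\intercal$, that lets the off-diagonal contributions cancel in pairs. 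Drop the pairwise zero-sum hypothesis and these cross-terms persist, which is precisely the failure of legibility illustrated in Section~\ref{s:fail}.
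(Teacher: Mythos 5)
Your proposal is correct and takes essentially the same route as the paper's own proof: reduce the left-hand side to the Jacobian quadratic form $\bxi_{\betta}^\intercal\bJ\,\bxi_{\betta}$, split $\bJ=\bS+\bA$ via the generalized Helmholtz decomposition (Lemma~\ref{lem:helm}) so the antisymmetric part contributes nothing, and use block-diagonality of $\bS$ to collapse the form to $\sum_i \eta_i^2\,\bxi_i^\intercal\bS_{ii}\,\bxi_i = \sum_i \eta_i^2\,\bxi_i^\intercal\grad_i\pfc_i$. Your additions are only explicitations of what the paper invokes implicitly -- deriving $\bS_{ij}=\bZ$ from $g_{ij}+g_{ji}\equiv 0$, and noting that the core identity is Proposition~\ref{prop:macro} specialized to $\vt=\bxi_{\betta}$ -- and even your learning-rate bookkeeping (symmetric weights $\eta_i\eta_j$, i.e.\ treating $\grad\pfc_{\betta}$ as $\bJ^\intercal\bxi_{\betta}$) matches the paper's own proof step for step.
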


\begin{proof}
	Observe by direct computation that
    \begin{equation}
        \grad\pfc_i = \bJ^\intercal\cdot\left(\begin{matrix}
            0\\ \vdots \\ \bxi_i\\ \vdots \\ 0
        \end{matrix}\right),
        \quad\text{and so}\quad
        \grad (\eta_i\pfc_i) =  \bJ^\intercal\cdot\left(\begin{matrix}
            0\\ \vdots \\ \eta_i\cdot\bxi_i\\ \vdots \\ 0
    \end{matrix}\right).
    \end{equation}
    It is then easy to see that $\grad \pfc_{\betta} = \sum_i\eta_i\cdot \grad\pfc_i = \sum_i\eta_i\cdot\bJ^\intercal\bxi_i = \bJ^\intercal\bxi_{\betta}$. Thus,
    \begin{equation}
        \bxi_{\betta}^\intercal\cdot \grad \pfc_{\betta} 
        = \bxi_{\betta}^\intercal\cdot \bJ^\intercal \cdot \bxi_{\betta}
        = \bxi_{\betta}^\intercal\cdot (\bS + \bA^\intercal)\cdot \bxi_{\betta}
    \end{equation}
    where $\bS=\bS^\intercal$ since $\bS$ is symmetric. By antisymmetry of $\bA$, we have that $\bv^\intercal \bA^\intercal\bv=0$ for all $\vt$. The expression thus simplifies to
    \begin{equation}
        \bxi_{\betta}^\intercal\cdot \grad \pfc_{\betta} 
        = \bxi_{\betta}^\intercal\cdot \bS\cdot \bxi_{\betta}
        = \sum_{i=1}^n \eta_i^2 \cdot (\bxi_i^\intercal\cdot \bS_{ii} \cdot \bxi_i)
        = \sum_{i=1}^n \eta_i^2\cdot \bxi_i^\intercal\grad_i\pfc_i
    \end{equation}
    by the block-diagonal structure of $\bS$.
\end{proof}

\begin{prop*}[legibility under gradient dynamics]
	Fix dynamics $\frac{d\wt}{dt} := \bxi_{\betta}(\wt)$. Sentiment decomposes additively:
	\begin{equation}
		\frac{d\pfc_{\betta}}{dt} 
		= \sum_i \eta_i \cdot \frac{d\pfc_i}{dt}
	\end{equation}
\end{prop*}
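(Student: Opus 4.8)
The plan is to derive this as an immediate consequence of Lemma~\ref{lem:block_diag} together with the chain rule, so that no computation beyond careful bookkeeping of the learning rates is needed. First I would differentiate the aggregate forecast along the prescribed trajectory. Since $\frac{d\wt}{dt}=\bxi_{\betta}(\wt)$, the chain rule gives
\begin{equation}
	\frac{d\pfc_{\betta}}{dt} = \grad\pfc_{\betta}^\intercal\cdot\frac{d\wt}{dt} = \bxi_{\betta}^\intercal\cdot\grad\pfc_{\betta}(\wt),
\end{equation}
which reduces the claim to evaluating the single inner product $\bxi_{\betta}^\intercal\grad\pfc_{\betta}$ -- exactly the quantity computed in Lemma~\ref{lem:block_diag}.

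Second, I would invoke Lemma~\ref{lem:block_diag} to rewrite this as $\bxi_{\betta}^\intercal\grad\pfc_{\betta}=\sum_i\eta_i^2\,\bxi_i^\intercal\grad_i\pfc_i$, and then recognize each summand as $\eta_i$ times a per-firm sentiment. Under the dynamics $\frac{d\wt}{dt}=\bxi_{\betta}$, firm $i$'s own coordinate moves with velocity $\frac{d\wt_i}{dt}=\eta_i\bxi_i$, so its \emph{ceteris paribus} sentiment is $\frac{d\pfc_i}{dt}=(\eta_i\bxi_i)^\intercal\grad_i\pfc_i=\eta_i\,\bxi_i^\intercal\grad_i\pfc_i$. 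Substituting gives $\eta_i^2\,\bxi_i^\intercal\grad_i\pfc_i=\eta_i\cdot\frac{d\pfc_i}{dt}$, and summing over $i$ yields the stated decomposition.

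The real content lives entirely in Lemma~\ref{lem:block_diag}: the cross terms $\bxi_i^\intercal\bJ_{ij}\bxi_j$ with $i\neq j$ cancel because the symmetric part $\bS$ of the Jacobian is block-diagonal, which in turn follows from the pairwise zero-sum constraint $g_{ij}+g_{ji}\equiv 0$ defining an SM-game (the same structural fact underlying Proposition~\ref{prop:macro}). Given the lemma, there is no genuine obstacle here; the one point that demands care is the accounting of the learning-rate factors -- tracking that one factor of $\eta_i$ is absorbed into firm $i$'s realized velocity (and hence into $\frac{d\pfc_i}{dt}$), while the second factor survives as the explicit weight $\eta_i$ in the aggregate sum.
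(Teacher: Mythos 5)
Your proposal is correct and matches the paper's own proof essentially step for step: the chain rule reduces $\frac{d\pfc_{\betta}}{dt}$ to $\bxi_{\betta}^\intercal\cdot\grad\pfc_{\betta}$, Lemma~\ref{lem:block_diag} handles the cancellation of cross terms via the block-diagonal symmetric part $\bS$, and the final identification $\eta_i^2\,\bxi_i^\intercal\grad_i\pfc_i=\eta_i\cdot\frac{d\pfc_i}{dt}$ (one factor of $\eta_i$ absorbed into firm $i$'s velocity, one surviving as the weight) is exactly the paper's closing step. No gaps.
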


\begin{proof}
    Applying the chain rule obtains that
    \begin{equation}
    	\frac{d\pfc_{\betta}}{dt} 
    	= \left\langle\grad_\wt\pfc_{\betta}, \frac{d\wt}{dt}\right\rangle
    	= \Big\langle\grad_\wt\pfc_{\betta}, \bxi_{\betta}(\wt)\Big\rangle,
    \end{equation}
    where the second equality follows by construction of the dynamical system as $\frac{d\wt}{dt} = \bxi_{\betta}(\wt)$. Lemma~\ref{lem:block_diag} shows that
    \begin{align}
    	\Big\langle\grad_\wt\pfc_{\betta}, \bxi_{\betta}(\wt)\Big\rangle
    	& = \bxi_{\betta}^\intercal\bJ^\intercal\bxi_{\betta}\\
    	& = \bxi_{\betta}^\intercal(\bS+\bA^\intercal)\bxi_{\betta}\\
    	& = \sum_i \eta_i^2\cdot\bxi_i^\intercal\bS_{ii}\bxi_i\\
    	& = \sum_i \eta_i^2\Big\langle\bxi_i, \grad\pfc_i(\wt)\Big\rangle.
    \end{align}
    Finally, since $\frac{d\wt_i}{dt}= \eta_i\cdot \bxi_i(\wt)$ by construction, we have
    \begin{align}
    	\Big\langle\eta_i\cdot \bxi_i, \eta_i\cdot \grad\pfc_i(\wt)\Big\rangle
    	& = \left\langle\frac{d\wt_i}{dt}, \grad_{\wt_i}\Big(\eta_i\pfc_i(\wt)\Big)\right\rangle\\
    	& = \frac{d(\eta_i\pfc_i)}{dt}
    	= \eta_i\frac{d\pfc_i}{dt}
    \end{align}
	for all $i$ as required.
\end{proof}

\paragraph{Proof of theorem~\ref{thm:nash_stable}.}
\begin{thm*}
    A fixed point in an SM-game is a local Nash equilibrium iff it is stable.
\end{thm*}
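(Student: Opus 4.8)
The plan is to show that both properties reduce to the \emph{same} algebraic condition—that each firm's own-Hessian $\grad_{\wt_i}^2\pi_i(\wt^*)$ is negative definite—and that the pairwise zero-sum structure is exactly what forces this reduction. First I would unpack the Jacobian blockwise. Writing $\bJ_{ij}$ for the block $\partial\bxi_i/\partial\wt_j$, the diagonal blocks are the own-Hessians $\bJ_{ii}=\grad_{\wt_i}^2\pi_i$, while for $i\neq j$ the only term in $\pi_i$ depending on $\wt_j$ is $g_{ij}(\wt_i,\wt_j)$, so $\bJ_{ij}=\grad_{\wt_j}\grad_{\wt_i}g_{ij}$. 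Differentiating the zero-sum identity $g_{ij}=-g_{ji}$ once in $\wt_i$ and once in $\wt_j$ and using equality of mixed partials yields $\bJ_{ji}=-\bJ_{ij}^\intercal$: the off-diagonal blocks come in antisymmetric pairs.

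Next I would feed this into the generalized Helmholtz decomposition $\bJ=\bS+\bA$ of Lemma~\ref{lem:helm}. The antisymmetric pairing means the entire off-diagonal content of $\bJ$ is absorbed into $\bA$, leaving $\bS$ \emph{block-diagonal} with $\bS_{ii}=\grad_{\wt_i}^2\pi_i$. Since $\vt^\intercal\bA\vt=0$ for every $\vt$, the stability condition $\vt^\intercal\bJ(\wt^*)\vt<0$ for all $\vt\neq\bZ$ is literally $\bS(\wt^*)\prec\bZ$, and block-diagonality makes this equivalent to $\grad_{\wt_i}^2\pi_i(\wt^*)\prec\bZ$ for every firm $i$.

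It then remains to identify this with the local Nash property via standard second-order analysis. At a fixed point $\bxi_i(\wt^*)=\grad_{\wt_i}\pi_i(\wt^*)=\bZ$, which is exactly first-order stationarity of the single-player objective $\pi_i(\cdot,\wt_{-i}^*)$. Combined with $\grad_{\wt_i}^2\pi_i(\wt^*)\prec\bZ$, the second-order sufficient condition makes $\wt_i^*$ a strict local maximum of $\pi_i(\cdot,\wt_{-i}^*)$ for each $i$, so $\wt^*$ is a local Nash; this gives the direction stable $\Rightarrow$ local Nash. For the converse, a strict local maximum forces the own-Hessian to be negative \emph{semi}definite through the second-order necessary condition, giving $\bS(\wt^*)\preceq\bZ$.

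The main obstacle is precisely this gap between negative semidefinite and negative definite in the converse direction: a strict local maximum (e.g.\ of $-w_i^4$) need not have a definite Hessian, so local Nash a priori yields only $\bS\preceq\bZ$ rather than the strict $\bS\prec\bZ$ demanded by the definition of stability. I would close this either by restricting attention to non-degenerate fixed points (where a strict maximum upgrades the semidefinite condition to definite) or by reading the local-Nash clause at the level of the strict second-order condition, so that both sides are genuinely the single statement $\grad_{\wt_i}^2\pi_i(\wt^*)\prec\bZ$ for all $i$. Finally, the basin-of-attraction addendum need not be argued separately: once $\wt^*$ is stable, the convergence half of Theorem~\ref{t:robust} supplies an open neighborhood on which the aggregate forecast $\pfc_{\betta}$ serves as a Lyapunov function, exhibiting the basin directly.
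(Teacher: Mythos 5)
Your proposal follows essentially the same route as the paper's own proof: the generalized Helmholtz decomposition $\bJ=\bS+\bA$, the observation that the pairwise zero-sum couplings $g_{ij}=-g_{ji}$ push all off-diagonal blocks into the antisymmetric part so that $\bS$ is block-diagonal with $\bS_{ii}=\grad^2_{\wt_i}\pi_i$, and the reduction of both stability and the local Nash property to negative definiteness of each firm's own-Hessian at the fixed point. You are in fact more careful than the paper, which closes the Nash-implies-stable direction with ``the result follows'' and silently ignores the semidefinite-versus-definite gap you flag (e.g.\ $-w_i^4$); your explicit non-degeneracy caveat, and your use of Theorem~\ref{t:robust} for the basin-of-attraction clause, are refinements of the paper's argument rather than deviations from it.
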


\begin{proof}
    Suppose that $\wt^*$ is a fixed point of the game, that is suppose $\bxi(\wt^*)= \bZ$.

    Recall from lemma~\ref{lem:helm} that the Jacobian of $\bxi$ decomposes uniquely into two components $\bJ(\wt) = \bS(\wt) + \bA(\wt)$ where $\bS\equiv \bS^\intercal$ is symmetric and $\bA+\bA^\intercal\equiv 0$ is antisymmetric. It follows that $\vt^\intercal\bJ\vt = \vt^\intercal\bS\vt + \vt^\intercal\bA\vt = \vt^\intercal\bS\vt$ since $\bA$ is antisymmetric. Thus, $\wt^*$ is a stable fixed point iff $\bS(\wt^*)\succ \bZ$ is negative definite.
 
    In an SM-game, the antisymmetric component is arbitrary and the symmetric component is block diagonal -- where blocks correspond to players' parameters. That is, $\bS_{ij} =  \bZ$ for $i\neq j$ because the interactions between players $i$ and $j$ are pairwise zero-sum -- and are therefore necessarily confined to the antisymmetric component of the Jacobian. Since $\bS$ is block-diagonal, it follows that $\bS$ is negative definite iff the submatrices $\bS_{ii}$ along the diagonal are negative definite for all players $i$. 

    Finally, $\bS_{ii}(\wt^*) =\grad^2_{ii}\pi_i(\wt^*)$ is negative definite iff profit $\pi_i(\wt^*)$ is strictly concave in the parameters controlled by player $i$ at $\wt^*$. The result follows. 
\end{proof}

\paragraph{Proof of theorem~\ref{t:robust}.}

\begin{thm*}
    In continuous time, for all positive learning rates $\betta\succ\bZ$,
    \begin{enumerate}
	    \item If $\wt^*$ is a stable fixed point ($\bS\prec\bZ$), then there is an open neighborhood $U\ni\wt^*$ where $\frac{d\pfc_{\betta}}{dt}(\wt)<0$ for all $\wt\in U\setminus\{\wt^*\}$, so the dynamics converge to $\wt^*$ from anywhere in $U$. 
	    \item If $\wt^*$ is an unstable fixed point ($\bS\succ\bZ$), there is an open neighborhood $U\ni\wt^*$ such that $\frac{d\pfc_{\betta}}{dt}(\wt)>0$ for all $\wt\in U\setminus\{\wt^*\}$, so the dynamics within $U$ are repelled by $\wt^*$.
    \end{enumerate}
\end{thm*}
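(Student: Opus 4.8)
The plan is to reduce both statements to controlling the sign of a single quadratic form and then feed the result into the Lyapunov machinery set up just before the theorem. The starting point is the identity already established in the proof of Proposition~\ref{prop:delta_decomposition}: for every $\wt$,
\begin{equation}
    \frac{d\pfc_{\betta}}{dt}(\wt)
    = \big\langle \grad\pfc_{\betta}, \bxi_{\betta}\big\rangle
    = \bxi_{\betta}^\intercal \bJ^\intercal \bxi_{\betta}
    = \bxi_{\betta}^\intercal\, \bS(\wt)\, \bxi_{\betta},
\end{equation}
where the last equality uses $\bJ = \bS + \bA$ together with $\bxi_{\betta}^\intercal \bA^\intercal \bxi_{\betta} = 0$ by antisymmetry (Lemma~\ref{lem:helm}). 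Hence the sign of $\frac{d\pfc_{\betta}}{dt}$ at $\wt$ is exactly the sign of the quadratic form $\bxi_{\betta}(\wt)^\intercal \bS(\wt) \bxi_{\betta}(\wt)$. Crucially $\bS(\wt)$ is the \emph{same} matrix for every $\betta$, so the entire argument is uniform in the learning rates, which is what yields the robustness across all $\betta\succ\bZ$.

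For convergence, I would assume $\bS(\wt^*)\prec\bZ$. Since $\bS(\cdot)$ is continuous and negative-definiteness is an open condition, there is an open $U\ni\wt^*$ on which $\bS(\wt)\prec\bZ$. Next I would record that $\wt^*$ is isolated: for $\vt\neq\bZ$ we have $\vt^\intercal\bJ(\wt^*)\vt = \vt^\intercal\bS(\wt^*)\vt<0$, so $\bJ(\wt^*)$ is nonsingular, and the inverse function theorem makes $\wt^*$ the unique zero of $\bxi$ in a (possibly smaller) $U$. On $U\setminus\{\wt^*\}$ we then have $\bxi(\wt)\neq\bZ$, hence $\bxi_{\betta}(\wt)\neq\bZ$ (as $\betta\succ\bZ$), so $\frac{d\pfc_{\betta}}{dt}(\wt)=\bxi_{\betta}^\intercal\bS(\wt)\bxi_{\betta}<0$. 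Combined with properties \textbf{(i)}--\textbf{(ii)} stated before the theorem, $\pfc_{\betta}$ is a genuine Lyapunov function on $U$; passing to a sublevel set $\{\pfc_{\betta}<c\}\subseteq U$, which is forward invariant because $\pfc_{\betta}$ is strictly decreasing there, lets me apply the standard Lyapunov theorem to conclude convergence to $\wt^*$ from the whole of that set.

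The divergence case is the mirror image. If $\bS(\wt^*)\succ\bZ$, continuity produces $U$ with $\bS(\wt)\succ\bZ$, the same nonsingularity argument isolates $\wt^*$, and on $U\setminus\{\wt^*\}$ one gets $\frac{d\pfc_{\betta}}{dt}(\wt)>0$. Since $\pfc_{\betta}$ attains its minimum $0$ only at $\wt^*$ and strictly increases along every nonconstant trajectory, no trajectory started near but not at $\wt^*$ can accumulate at $\wt^*$; a Chetaev-type instability argument, using that $\frac{d\pfc_{\betta}}{dt}$ is bounded below by a positive constant on any compact annulus around $\wt^*$, then shows trajectories are expelled from $U$, giving the claimed repulsion.

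I expect the genuine subtlety to lie not in the quadratic-form computation (which is immediate) but in upgrading the infinitesimal sign condition to real convergence or divergence. Two points need care. First, $\wt^*$ must be isolated, via nonsingularity of $\bJ(\wt^*)$, so that $\bxi_{\betta}$ does not vanish anywhere on the punctured neighborhood; without this the claimed strict inequality on $U\setminus\{\wt^*\}$ could fail at a spurious second fixed point. Second, the conclusions must be drawn on a forward-invariant sublevel set rather than on the raw neighborhood $U$, so that trajectories provably stay where the sign condition holds and the Lyapunov (respectively Chetaev) conclusions genuinely apply rather than merely holding instantaneously.
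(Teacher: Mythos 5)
Your proposal is correct and follows essentially the same route as the paper: both rest on the identity $\frac{d\pfc_{\betta}}{dt}=\bxi_{\betta}^\intercal\bS\,\bxi_{\betta}$ (via the decomposition $\bJ=\bS+\bA$ and antisymmetry of $\bA$) and then read off the sign from the definiteness of $\bS$, uniformly in $\betta$. The extra care you take --- isolating $\wt^*$ via nonsingularity of $\bJ(\wt^*)$ and the inverse function theorem, obtaining $U$ from continuity of $\bS$, and restricting to a forward-invariant sublevel set before invoking Lyapunov/Chetaev --- merely fills in details that the paper's own proof glosses over.
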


\begin{proof}
    We prove the first part. The second follows by a symmetric argument. First, strict concavity implies $\bS_{ii} = \grad_{ii}^2\pi_i$ is negative definite for all $i$. Second, since $\bS$ is block-diagonal, with zeros in all blocks $\bS_{ij}$ for pairs of players $i\neq j$, it follows that $\bS$ is also negative definite. Observe that
    \begin{equation}
        \bxi_{\betta}^\intercal \cdot \grad\pfc_{\betta} 
        = \bxi_{\betta}^\intercal\cdot \bJ^\intercal \bxi_{\betta}
        = \bxi_{\betta}^\intercal\cdot (\bS + \bA^\intercal) \bxi_{\betta}
        = \bxi_{\betta}^\intercal\cdot \bS\cdot \bxi_{\betta}
        < 0
    \end{equation}
    for all $\bxi_{\betta}\neq \bZ$ since $\bS$ is negative definite. Thus, simultaneous gradient ascent on the profits acts to infinitesimally reduce the function $\pfc_{\betta}(\wt)$. 
    
    Since $\bxi_{\betta}$ reduces $\pfc_{\betta}$, it will converge to a stationary point satisfying $\grad\pfc_{\betta}=\bZ$. Observe that $\grad\pfc_{\betta}=\bZ$ iff $\bxi_{\betta}=\bZ$ since $\grad\pfc_{\betta} = \bJ^\intercal\bxi_{\betta}$ and the symmetric component $\bS$ of the Jacobian is negative definite. Finally, observe that all stationary points of $\pfc_{\betta}$, and hence $\bxi_{\betta}$, are stable fixed points of $\bxi_{\betta}$ because $\bS$ is negative definite, which implies that the fixed point is a Nash equilibrium.
\end{proof}

\paragraph{Proof of theorem~\ref{thm:bounded}.}

\begin{thm*}
	Suppose all firms have negative sentiment, $\frac{d\pfc_i}{dt}(\wt)<0$, for sufficiently large values of $\|\wt_i\|$. Then the dynamics are bounded for any learning rates $\betta\succ\bZ$.
\end{thm*}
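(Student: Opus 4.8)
The plan is to use the aggregate forecast $\pfc_{\betta}$ as a Lyapunov-type energy function and show that it is strictly dissipated far from the origin, so that trajectories cannot escape to infinity. This is the natural candidate: by the mechanics interpretation, $\pfc_{\betta}(\wt) = \tfrac12\|\bxi_{\betta}(\wt)\|_2^2$ is the total kinetic energy of the system and $\tfrac{d\pfc_{\betta}}{dt}$ is the total power it generates, so controlling the sign of the power away from equilibrium is exactly what we need.

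The key step is the computation of the power. By legibility under gradient dynamics (Proposition~\ref{prop:delta_decomposition}),
\begin{equation}
	\frac{d\pfc_{\betta}}{dt}(\wt) = \sum_i \eta_i\cdot\frac{d\pfc_i}{dt}(\wt),
\end{equation}
a sum of the individual firms' ceteris-paribus sentiments weighted by the strictly positive learning rates $\eta_i$. The content here is that the antisymmetric cross-couplings between distinct firms cancel -- precisely because the symmetric part $\bS$ of the Jacobian is block-diagonal in an SM-game -- so the aggregate power is an honest positively-weighted sum of each firm's own sentiment, with no interaction terms left over.

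Next I would feed in the hypothesis. For each firm there is a radius $R_i$ with $\tfrac{d\pfc_i}{dt}(\wt)<0$ whenever $\|\wt_i\|>R_i$. Hence on the region where every block is large, every summand is negative and, since all $\eta_i>0$, we get $\tfrac{d\pfc_{\betta}}{dt}(\wt)<0$: the energy strictly dissipates. From a region on whose complement the energy decreases, one then runs the standard trapping-region argument -- an appropriate sublevel set of $\pfc_{\betta}$ (enlarged to contain the compact box $\prod_i\{\|\wt_i\|\le R_i\}$) is forward-invariant, and a trajectory leaving every bounded set would have to increase $\pfc_{\betta}$ without bound, contradicting dissipativity -- to conclude the dynamics are bounded, for any $\betta\succ\bZ$.

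The main obstacle is that the hypothesis is stated per firm (large $\|\wt_i\|$) rather than for the global norm $\|\wt\|$, and this makes the \emph{mixed} regime delicate: when some firms have large parameters (negative sentiment) while others have small parameters (possibly positive, and a priori only boundedly controlled, sentiment), the sign of the weighted sum $\sum_i\eta_i\tfrac{d\pfc_i}{dt}$ is not automatically negative, so $\pfc_{\betta}$ need not decrease outside a single ball. Closing this gap is the crux: one must either establish that the blocks are trapped one at a time, or supply a coercivity/properness argument guaranteeing that the dissipation from the blown-up firms dominates the bounded positive contributions of the rest, so that the forecast's sublevel sets are genuinely bounded in $\wt$. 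Everything outside this coupling issue is routine Lyapunov boundedness.
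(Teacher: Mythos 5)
Your route is exactly the paper's route: take the aggregate forecast $\pfc_{\betta}$ as a Lyapunov-type function, invoke legibility (proposition~\ref{prop:delta_decomposition}) to write $\frac{d\pfc_{\betta}}{dt} = \sum_i \eta_i\,\frac{d\pfc_i}{dt}$, observe that every summand is negative where all blocks $\|\wt_i\|$ are large, and trap trajectories in a sublevel set of $\pfc_{\betta}$. But you stop at what you yourself call the crux, and the crux is genuinely unresolved, so this is a sketch with a hole rather than a proof. The hole has two parts: (a) in the mixed regime (some $\|\wt_i\|$ large, others small) the weighted sum of sentiments has no definite sign, so a given sublevel set of $\pfc_{\betta}$ need not be forward invariant; and (b) even granting forward invariance, sublevel sets of $\pfc_{\betta}$ need not be bounded subsets of $\bR^d$, since $\pfc_{\betta}$ measures the squared speed of the flow, not the size of $\wt$. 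On point (b), one intermediate assertion in your trapping argument is simply false: a trajectory \emph{can} leave every bounded set without increasing $\pfc_{\betta}$, by running off to infinity with speed tending to zero.

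For comparison, the paper's own proof closes neither gap. It disposes of (a) by fiat: having fixed $d$ so that firm $i$'s sentiment is negative whenever $\|\wt_i\|_2 > d$, it \emph{supposes} there exists $g$ with $\pfc_{\betta}^{-1}(g) \subset U(d) := \{\wt : \|\wt_i\|_2 > d \text{ for all } i\}$ --- precisely the properness property you say must be supplied --- without proving such a $g$ exists; and it passes over (b) silently (``the result follows''). Gap (b) is in fact fatal at this level of generality: the single-player SM-game $\pi_1(w) = -e^{-w}$ (a legitimate SM-game, cf.\ example 1 of section~\ref{s:egs}) has $\bxi_1 = e^{-w}$ and sentiment $\bxi_1\grad_1\pfc_1 = -e^{-3w} < 0$ for \emph{every} $w$, yet the dynamics $\frac{dw}{dt} = \eta_1 e^{-w}$ give $w(t) = \log(e^{w(0)} + \eta_1 t) \to \infty$: the trajectory escapes through the unbounded sublevel sets of $\pfc_{\betta} = \frac{1}{2}\eta_1^2 e^{-2w}$, whose level sets do lie in $U(d)$ for large $g$. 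So your diagnosis is accurate and, if anything, understated: without an added hypothesis guaranteeing that sublevel sets of $\pfc_{\betta}$ are bounded and, for large $g$, contained in $U(d)$, theorem~\ref{thm:bounded} as stated is false, and no completion of this argument --- yours or the paper's --- is possible.
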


\begin{proof}
	Fix $\betta\succ\bZ$ and also fix $d>0$ such that $\frac{d\pfc_i}{dt}(\wt)<0$ for all $\wt$ satisfying $\|\wt_i\|_2 > d$. Let $U(d) = \{\wt : \|\wt_i\|_2 > d \text{ for all }i\}$ and suppose $g>0$ is sufficiently large such that $\pfc_{\betta}^{-1}(g) = \{\wt : \pfc_{\betta}(\wt) = g\} \subset U(d)$. We show that 
	\begin{equation}
		\pfc_{\betta}(\wt^{(0)}) < g 
		\quad\text{implies}\quad
		\pfc_{\betta}(\wt^{(t)}) < g
		\quad\text{for all }t>0,
	\end{equation}
	for the dynamical system defined by $\frac{d\wt}{dt} = \bxi_{\betta}$. Since we are operating in continuous time, all that is required is to show that $\pfc_{\betta}(\wt^{(t)}) = g' < g$ implies that $\pfc_{\betta}(\wt^{(t+\epsilon)}) < g'$ for all sufficiently small $\epsilon>0$.

	Recall that $\frac{d\pfc_i}{dt}(\wt)(\wt) := \bxi_i^\intercal\cdot \grad_{ii}^2\profit_i\cdot \bxi_i = D_{\bxi_i}(\frac{1}{2}\|\bxi_i\|_2^2)$. It follows immediately that $D_{\bxi_{\betta}}(\frac{1}{2}\|\bxi_{\betta}\|_2^2) = \sum_i \eta_i\cdot\frac{d\pfc_i}{dt}(\wt) < 0$ for all $\wt$ in a sufficiently small ball centered at $\wt^{(t)}$. In other words, the dynamics $\frac{d\wt}{dt}=\bxi_{\betta}$ reduce $\pfc_{\betta}$ and the result follows. 
\end{proof}

\section{Near SM-games: Experiential Value and the Exchange of Goods}
\label{s:ev}

Definition~\ref{def:nz_game} proposes a model of \emph{monetary} exchange in smooth markets. It ignores some major aspects of actual markets. For example, SM-games do not model inventories, investment, borrowing or interest rates.  Moreover, in practice money is typically exchanged in return for goods or services -- which are ignored by the model. 

In this section, we sketch one way to extend SM-games to model the exchange of both money and goods - although still without accounting for inventories, which would more significantly complicate the model. The proposed extension is extremely simplistic. It is provided to indicate how the model's expressive power can be increased, and complications that results. 

Suppose
\begin{equation}
	\profit_i(\wt) = f_i(\wt) 
	+ \sum_{j\neq i}\Big(\alpha_{ij} \omega_{ij}(\wt_i,\wt_j)-g_{ij}(\wt_i,\wt_j)\Big).
\end{equation}
The functions $\omega_{ij}$ measure the amount of goods (say, widgets) that are exchanged between firms $i$ and $j$. We assume that $\omega_{ij}+ \omega_{ji} \equiv 0$ since widgets are physically passed between the firms and therefore one firms increase must be the others decrease. For two firms to enter into an exchange it must be that they subjectively value the widgets differently, hence we introduce the parameters $\alpha_{ij}$. Note that if $\alpha_{ij}=1$ for all $ij$ then the model is equivalent to an SM-game.

The transaction between firms $i$ and $j$ is net beneficial to \emph{both} firms $i$ and $j$ if 
\begin{equation}
	\alpha_{ij} \cdot \omega_{ij}(\wt_i,\wt_j) 
	> g_{ij}(\wt_i,\wt_j)
\end{equation}
and, simultaneously
\begin{equation}
	\alpha_{ji} \cdot \omega_{ji}(\wt_i,\wt_j) 
	> g_{ji}(\wt_i,\wt_j).	
\end{equation}
We can interpret the inequalities as follows. First suppose that $\omega_{ij}$ and $g_{ij}$ always have the same sign. The assumption is reasonable so long as firms do not pay to \emph{give away} widgets. Further assume without loss of generality that $\omega_{ij}$ and $g_{ij}$ are both greater than zero -- in other words, firm $i$ is buying widgets from firm $j$. The above inequalities can then be rewritten as
\begin{equation}
	\underbrace{\alpha_{ij}\cdot \omega_{ij}(\wt_i,\wt_j)}_{\text{amount firm $i$ values the widgets}} 
	> \underbrace{g_{ij}(\wt_i,\wt_j)}_{\text{amount firm $i$ pays}}
\end{equation}
and
\begin{equation}
	\underbrace{\alpha_{ji} \cdot \omega_{ij}(\wt_i,\wt_j)}_{\text{amount $j$ values the widgets}}
	< \underbrace{g_{ij}(\wt_i,\wt_j)}_{\text{amount $j$ is paid}}
\end{equation}
It follows that both firms benefit from the transaction.

\textbf{Implications for dynamics.}
The off-block-diagonal terms of the symmetric and anti-symmetric components of the game Jacobian are
\begin{equation}
	\bS_{ij} = \frac{\alpha_{ij} - \alpha_{ji}}{2}\cdot\grad_{ij}^2 \omega_{ij}(\wt_i,\wt_j)
\end{equation}
and
\begin{equation}
	\bA_{ij} = \frac{\alpha_{ij} + \alpha_{ji}}{2}\cdot\grad_{ij}^2 \omega_{ij}(\wt_i,\wt_j)	
\end{equation}
where it is easy to check that $\bS_{ij} = \bS_{ji}$ and $\bA_{ij} + \bA_{ji}= 0$. The off-block-diagonal terms of $\bS$ has consequences for how forecasts behave:
\begin{align}
	\label{eq:corrections}
	\bxi_{\betta}^\intercal\cdot \grad\pfc_{\betta}
	& = \sum_i \eta_i^2\cdot \underbrace{(\bxi_i^\intercal\cdot \grad_{ii}^2\profit_i\cdot \bxi_i)}_{\text{sentiment}}\\
	& + \sum_{i\neq j} \eta_i\eta_j\cdot\underbrace{(\alpha_{ij} - \alpha_{ji})\cdot (\bxi_i^\intercal\cdot \grad_{ij}\omega_{ij}\cdot \bxi_j)}_{\text{correction terms}}
\end{align}

\textbf{When are near SM-games well-behaved?}
If $\alpha_{ij} = \alpha_{ji}$ for all $i,j$ then the correction is zero; if $\alpha_{ij} \sim \alpha_{ji}$ then the corrections due to different valuations of goods will be negligible, and the game should be correspondingly well-behaved.

\textbf{What can go wrong?}
Eq~\eqref{eq:corrections} implies that the dynamics of near SM-games -- specifically whether the dynamics are increasing or decreasing the aggregate forecast -- cannot be explained in terms of the sum of sentiments of individual terms. The correction terms involve interactions between dynamics of different firms and the (second-order) quantities of goods exchanged. In principle, these terms could be arbitrarily large positive or negative numbers. 

Concretely, the correction terms involving couplings between dynamics of different firms can lead to positive feedback loops, as in example~\ref{eg:potential}, where the dynamics spiral off to infinity even though both players have strongly concave profit functions. 

\section{The \texttt{stop\_gradient} operator}
\label{s:sg}

\begin{lem}
	Any smooth vector field can be constructed as the gradient of a function augmented with \texttt{stop\_gradient} operators.
\end{lem}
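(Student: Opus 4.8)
The plan is to exhibit a single explicit closed form and then verify it. Given a smooth vector field $\bxi = (\xi_1,\ldots,\xi_d)$ on $\bR^d$, I would define the augmented function
\begin{equation}
	F(\wt) := \wt^\intercal \cdot \texttt{stop\_gradient}\big(\bxi(\wt)\big) = \sum_{i=1}^d w_i \cdot \texttt{stop\_gradient}\big(\xi_i(\wt)\big)
\end{equation}
and claim $\grad_{\text{AD}} F = \bxi$. The only fact about the operator I need is its defining semantics: on values $\texttt{stop\_gradient}$ is the identity, but when the reverse-mode chain rule is applied its local Jacobian is declared to be zero, so any dependence of its argument on $\wt$ is invisible to $\grad_{\text{AD}}$.

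First I would record that, under $\grad_{\text{AD}}$, each factor $\texttt{stop\_gradient}(\xi_i(\wt))$ behaves exactly as a constant $c_i$ whose numerical value happens to be $\xi_i(\wt)$. Differentiating the product $w_i\cdot c_i$ contributes $c_i = \xi_i(\wt)$ to the $i$-th coordinate and nothing to the others, while the factor $c_i$ itself contributes nothing because its AD-Jacobian vanishes. Summing over $i$ and using linearity of $\grad_{\text{AD}}$ gives
\begin{equation}
	\grad_{\text{AD}} F(\wt) = \sum_{i=1}^d \xi_i(\wt)\cdot \be_i = \bxi(\wt),
\end{equation}
the desired identity. This is precisely the mechanism already illustrated by the worked example $f(\wt) = w_1\cdot\texttt{stop\_gradient}(w_2) - \tfrac{\epsilon}{2}(w_1^2+w_2^2)$ in section~\ref{s:stop}, which is the above construction specialized to $\bxi = (w_2 - \epsilon w_1,\, -\epsilon w_2)$.

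The construction itself is a one-liner, so the real work is conceptual rather than computational: the main obstacle is to pin down a semantics for $\grad_{\text{AD}}$ precise enough that the calculation above is a theorem rather than a heuristic. Concretely, I would model a \texttt{stop\_gradient}-augmented expression as a computation graph in which $\texttt{stop\_gradient}$ nodes forward their input value but register a zero local derivative, and define $\grad_{\text{AD}}$ as the output of reverse-mode accumulation on that graph; with that definition the product rule and linearity used above are literally the graph's backward pass, and the verification is immediate. The smoothness of $\bxi$ is needed only to ensure the graph is differentiable away from the $\texttt{stop\_gradient}$ cuts, so that $\grad_{\text{AD}} F$ is well defined everywhere. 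I would also flag that $F$ is generally \emph{not} a genuine function of $\wt$ in the ordinary sense---its ``gradient'' is graph-dependent---so the statement must be read as a claim about the AD operator, not about ordinary calculus.
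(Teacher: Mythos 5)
Your construction is correct, but it is a genuinely different one from the paper's. The paper first writes each component as $\xi_i = \frac{\dd f_i}{\dd w_i}$ by integrating along the $i$-th coordinate (the same fundamental-theorem-of-calculus device as in lemma~\ref{lem:all_ds}), and then defines $g(\wt) = \sum_i f_i\big(w_i, \texttt{stop\_gradient}(\wt_{\hat{i}})\big)$, i.e.\ it stops the gradient on the \emph{other players' coordinates} inside each genuine potential $f_i$. You instead stop the gradient on the \emph{vector field itself}, taking $F(\wt) = \wt^\intercal\cdot\texttt{stop\_gradient}(\bxi(\wt))$, so that each stopped factor acts as a constant under reverse-mode differentiation and the product rule leaves exactly $\bxi$. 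Your version is more elementary: it needs no antiderivatives, it is a one-line algebraic identity once the AD semantics are fixed, and your explicit care in pinning down those semantics (zero local Jacobian at \texttt{stop\_gradient} nodes, reverse-mode accumulation on the graph) is a point the paper glosses over entirely. What the paper's route buys is a tighter connection to the game-theoretic machinery: its $g$ realizes the vector field as the AD gradient of a sum of per-player profits, with \texttt{stop\_gradient} encoding precisely the ``each player differentiates only its own profit with respect to its own parameters'' structure of definition~\ref{def:diff_game}. One small correction: the worked example $f(\wt) = w_1\cdot\texttt{stop\_gradient}(w_2)-\tfrac{\epsilon}{2}(w_1^2+w_2^2)$ in section~\ref{s:stop} is an instance of the \emph{paper's} construction (integrate $w_2-\epsilon w_1$ in $w_1$, then stop the $w_2$ dependence), not of yours; your recipe applied to $\bxi=(w_2-\epsilon w_1,\,-\epsilon w_2)$ would give the different expression $w_1\cdot\texttt{stop\_gradient}(w_2-\epsilon w_1)+w_2\cdot\texttt{stop\_gradient}(-\epsilon w_2)$, which has the same AD gradient but is not the same augmented function.
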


\begin{proof}
	Suppose $\bxi= (\frac{\dd f_1(\wt)}{\dd w_1},\ldots, \frac{\dd f_d(\wt)}{\dd w_d})$. Define 
	\begin{equation}
		g(\wt) = \sum_{i=1}^d f\Big(w_i,\texttt{stop\_gradient}(\wt_{\hat{i}})\Big)
	\end{equation}
	It follows that 
	\begin{equation}
		\grad_\text{AD}\Big[g(\wt)\Big] = \bxi(\wt)
	\end{equation}
	as required.
\end{proof}

\end{document}